\documentclass[twoside]{article}

\usepackage[accepted]{aistats2026}
\usepackage[utf8]{inputenc} 
\usepackage[T1]{fontenc}    
\usepackage{hyperref}       
\usepackage{url}            
\usepackage{booktabs}       
\usepackage{amsfonts}       
\usepackage{nicefrac}       
\usepackage{microtype}      
\usepackage{xcolor}         
\usepackage{graphicx}
\usepackage{aux/macros}
\usepackage{wrapfig}         
\usepackage[round]{natbib}

\begin{document}

%

%

\twocolumn[

\aistatstitle{Breaking Data Symmetry is Needed For Generalization in Feature Learning Kernels}

\aistatsauthor{ Marcel Tomàs \And Neil Mallinar \And Mikhail Belkin }

\aistatsaddress{ CFIS UPC \And  UC San Diego \And UC San Diego } ]

\begin{abstract}
    Grokking occurs when a model achieves high training accuracy but generalization to unseen test points happens long after that.
    This phenomenon was initially observed on a class of algebraic problems, such as learning modular arithmetic \citep{power2022grokking}.
    We study grokking on algebraic tasks in a class of feature learning kernels via the Recursive Feature Machine (RFM) algorithm \citep{rfm_science}, which iteratively updates feature matrices through the Average Gradient Outer Product (AGOP) of an estimator in order to learn task-relevant features.  Our main experimental finding is that generalization occurs only when a certain symmetry in the training set is broken. Furthermore, we empirically show that RFM generalizes by recovering the underlying symmetry group action inherent in the data. 
    We find that the learned feature matrices encode specific elements of the symmetry group, explaining the dependence of generalization on symmetry.\footnote{Code available at: \url{https://github.com/marceltomas/breaking-data-symmetries}.}
\end{abstract}

\section{INTRODUCTION}

The phenomenon of grokking has attracted much attention in recent literature \citep{power2022grokking,kumar2024grokking,mohamadi2023grokking,gromov2023grokking,liu2023omnigrokgrokkingalgorithmicdata,mallinar2025emergence}.
Grokking refers to the delayed generalization effect where a model perfectly fits (or \textit{interpolates}) the training dataset almost immediately, but performs no better than random guessing on the test data.
After continued training, sometimes for thousands of additional optimization steps \citep{power2022grokking}, the test performance improves from random guessing to nearly perfect performance, seemingly for no reason.
This is often considered to be an  example of so-called emergent phenomena in deep learning, and was initially shown in transformer models that are trained on algebraic tasks, such as imputing missing entries of a Cayley table for modular arithmetic \citep{power2022grokking}.

Recently, \cite{mallinar2025emergence} showed that such emergent curves in modular arithmetic are not unique to neural architectures nor gradient descent based optimization methods, and indeed appear in a class of feature learning kernels via the Recursive Feature Machine (RFM) \citep{rfm_science} algorithm.
RFM, when applied to a kernel, operates by iteratively fitting a Mahalanobis kernel predictor on training data, and then updating the feature matrix via the Average Gradient Outer Product (AGOP), which learns task-specific features.

\citet{mallinar2025emergence} show that RFM, with the Gaussian and quadratic kernels, learns block-circulant feature transformations (on data) to solve modular arithmetic tasks (addition, subtraction, multiplication, and division).
They argue that grokking is a manifestation of a gradual feature learning process, which can be understood via the Average Gradient Outer Product (AGOP), and that these feature transformations on the input data are sufficient to enable generalization. We build our study upon this line of work, finding the group-theoretic interpretation of such feature transformations by studying the symmetry group of the problem, and how the structure of the data has an impact on the behavior of RFM and the features it learns. 

More precisely, we examine how RFM learns features on a broader class of algebraic tasks, extending beyond modular arithmetic to other Abelian groups. We do so by turning to the group-theoretic properties of modular arithmetic, and more generally Abelian groups, and characterize more precisely the way in which AGOP in RFM learns features from group structure. Binary operations are a very studied setting in grokking \citep{mlpGrokking22,morwani2024featureemergencemarginmaximization, stander2024grokkinggroupmultiplicationcosets}, and we narrow our scope to the set of binary Abelian group addition due to the structure of its symmetry group.

We find that the target function in this setting can be entirely described by its symmetry group, and thus every data sample can be transformed into any other data sample with the same label through transformations described by such group. Understanding the symmetry group inherent in the data gives us a framework to describe the learned features, as well as design data partitions that trap RFM in specific symmetry-induced failure modes, inhibiting generalization. Our core contributions include:
\begin{enumerate}
    \item We experimentally find that RFM generalizes by learning the symmetry group of the target function, and breaking symmetry in the training data is needed to recover generalization.
    \item We propose a description of the data partitions in modular arithmetic and other Abelian groups that, as we show empirically, inhibit generalization in RFM.
    \item Empirically, we show that features learned by RFM align to representations of the symmetry group for the target function of the problem, and initializing RFM features that encode a subgroup of the symmetry group leads to generalization only within the orbit of that subgroup action.
\end{enumerate}

\section{PRELIMINARIES} \label{sec:prelim}
The task of modular arithmetic corresponds to the more general task of binary group operations. Indeed, the set of integers modulo $p$ under modular addition $\mathbb{Z}_p$ is isomorphic to the cyclic group of order $p$. Also, the multiplicative set of integers modulo $p$ (removing the 0) under modular multiplication $\mathbb{Z}^*_p$ is a cyclic group of order $p-1$ (if $p$ is prime). As such, it can be understood as filling in missing values of the Cayley table (see Section \ref{sec:groups}) for the cyclic group $C_p$ for modular addition, and $C_{p-1}$ for modular multiplication, for some prime number $p$. We further consider a generalization of this setting to other Abelian groups.

We follow the data setup of \citet{gromov2023grokking,mallinar2025emergence} and describe it here. Our goal is to learn a binary function $f(a,b)=a \star b = c$, where $a,b,c \in \mathcal{A}$ for an Abelian group $\mathcal{A}$ of order $n$ with operation $\star$. Group elements are individually one-hot encoded into vectors $e_i \in \mathbb{R}^n$. Input training data points, $x$, are formed by pairs of group elements $(a,b)$ which are concatenated to form data as $x = e_a \vert\vert e_b \in \mathbb{R}^{2n}$. The label for each point, $y \in \mathbb{R}^n$, is given by the one-hot encoded representation of the corresponding entry in the Cayley table, $e_{a \star b} \in \mathbb{R}^n$. Given a Cayley table we obtain $n^2$ total points for an Abelian group of order $n$. 

For addition and subtraction modulo $p$, we get $p^2$ points. On the other hand, we get ${(p-1)^2}$ total points for multiplication and division modulo $p$, since 0 is not a group element. We then split these points into a train set and test set where the proportion of points in the train set is referred to as the training fraction, $r \in [0, 1]$.

\subsection{Feature learning with Recursive Feature Machines (RFM)} \label{sec:rfm}

\cite{rfm_science} propose a class of feature learning kernels called Recursive Feature Machines (RFM), which learn features through iteratively updating features via the Average Gradient Outer Product (AGOP) of the kernel after fitting data.
We first define AGOP formally and then provide the RFM algorithm for completeness.\\ 

\begin{definition}[Average Gradient Outer Product (AGOP)]
    Given $n$ data points, $X \in \R^{n \times d}$, and a function with $c$ outputs given by $f: \R^d \to \R^c$, denote the Jacobian of $f$ with respect to a point $x_i \in X$ by $J_{f}(x_i) \in \R^{c \times d} = \frac{\partial f(x_i)}{\partial x}$. Then AGOP is defined as,
    \begin{align*}
        \agop(f, X) &= \frac{1}{n}\sum_{i=1}^n J_{f}(x_i)^T J_{f}(x_i) \in \R^{d \times d}.
    \end{align*}
\end{definition}
\cite{rfm_science} show that the AGOP of a neural network highlights task-relevant feature directions with respect to the input data and is highly correlated to the covariance matrix of the weights of the neural network throughout training.
From this observation, they propose RFM as an iterative process which enables kernels (that do not natively have a feature learning mechanism) to learn features.

Given training data $X \in \R^{n \times d}$, training labels $y \in \R^{n \times c}$, a Mahalanobis kernel $k_M(\cdot, \cdot): \R^d \times \R^d \to \R$, and an initial positive semi-definite matrix $M_0 \in \R^{d \times d}$ (often taken to be the identity, meaning we impose no prior structure on the features at the start of RFM). 
RFM applies the following two steps for $T$ iterations, starting at $t = 0$:
\begin{enumerate}
    \item Estimator fitting: construct kernel $K \in \R^{n \times n}$ where $K_{i,j} = k_{M_t}(x_i, x_j)$. Fit parameters $\alpha = K^{-1}y$ to obtain estimator $\hat{f}(\cdot) = \sum_{i=1}^n \alpha_i k_{M_t}(x_i, \cdot)$.
    \item Feature updates: $M_{t+1} = (\agop(\hat{f}, X))^s$ for some power $s > 0$ (commonly taken to be $s = 1/2$).
\end{enumerate}
In this work we consider the Mahalanobis quadratic kernel, $k_M(x, x') = (x^TMx')^2$, and the Mahalanobis Gaussian kernel, $k_M(x, x') = \exp(\frac{-\snorm{x - x'}^2_M}{L})$ where $\snorm{a}_M^2 = a^T M a$ and $L \in \R$ is a kernel bandwidth parameter. 
We choose these kernels and bandwidth parameter ($L =2.5$) from prior work which found they grok modular arithmetic \citep{mallinar2025emergence}.

\subsection{Group Theory} \label{sec:groups}

In this section, we provide a brief summary of group theory, covering the fundamental definitions that are relevant to this work.
\paragraph{Groups.}
A group $G$ is a nonempty set equipped with a binary operation $\star$ that combines two elements $a,b \in G$ to form another element of $G$, such that it satisfies associativity, has an identity element, and all elements have inverses under the operation. A group $G$ is \textit{Abelian} if the group operation is commutative, i.e., $a \star b = b \star a$ for all $a,b \in G$.
\paragraph{Subgroups. } Given a group $G$, a subgroup $H$, denoted $H \leq G$, is a subset of $G$ that satisfies the group axioms.

\paragraph{Cayley tables.} Let $G = \{g_1, g_2, \cdots g_p\}$ be a finite group of order $p$ equipped with a binary operation $\star$. The Cayley table associated with $(G, \star)$ characterizes the structure of the group under this operation and can be visualized as a standard ``multiplication table" with elements representing $g_i \star g_j$ for all pairs $i, j$. 


\paragraph{Homomorphisms. } If $(G,\star)$ and $(H,\star)$ are groups, a \textit{homomorphism} is a function $\varphi:G\to H$ such that $\varphi(x\star y)=\varphi(x)\star\varphi(y)$ for all $x,y\in G$. In other words, it respects the group operation. An \textit{isomorphism} is a \textit{bijective} homomorphism. An \textit{automorphism} is an isomorphism from a group to itself.

\paragraph{Group actions.}
A group action is a way in which a group $G$ can act on some set $X$ by transforming its elements while respecting the group structure. Formally, if $G$ is a group with identity element $e$, a group action is a function $\varphi:G \times X \to X$, often written as $\varphi(g,x)=g x$, that satisfies two key properties:
\begin{enumerate}
\item Identity element: for the identity element $e \in G$, we have that $e x = x$ for all $x \in X$.
\item Compatibility: for any $g,h \in G$ and $x \in X$, the action must satisfy the rule $g  (h x) = (g \star h) x$.
\end{enumerate}
Group actions provide a mathematical framework for describing and analyzing symmetries. 
A group action that preserves certain properties of interest, for example distances or angles, is referred to as a symmetry.

\paragraph{Orbit under a group. } Given a group $G$ acting on a set $X$, the orbit of an element $x \in X$ is the collection of all points reachable from $x$ under the group action:
$$
\text{Orb}_G(x) = \{g  x \mid g \in G\}.
$$
More generally, the orbit of a subset $X' \subseteq X$ is the union of the orbits of its elements, i.e.
$$
\text{Orb}_G(X') = \bigcup_{x \in X'} \text{Orb}_G(x) = \{g  x \mid g \in G, x \in X'\}.
$$
\paragraph{Symmetry groups.} Let $G$ be a group acting on a set $X$. Let $f:X \to Y$ be a function with $x \in X$ for some set $Y$. The function $f$ is  \textit{invariant} under the action of $G$ if $f(g x) = f(x)$ for all $x \in X$ and $g\in G$. The \textit{symmetry group} (also called \textit{invariance group}) of the function $f$ is defined as:
$$ \text{Sym}_f := \{g \in \text{Aut}(X) \mid f(g x)=f(x) \:\:\forall\:\: x \in X\}$$
where $\text{Aut}(X)$ is the set of all automorphisms on $X$, with composition as the group operation. If $f$ is a classifier, this corresponds to the set of transformations that do not change the output label.

\section{BREAKING DATA SYMMETRY ENABLES GENERALIZATION} \label{sec:breaking}

Using the setup in Section \ref{sec:prelim}, we train RFM with one-hot encoded input-label pairs on modular arithmetic (we generalize results on other Abelian groups in Appendix \ref{sec:fixed_point_experiments} and \ref{sec:generalization_experiments}).
Prior work shows that RFM groks addition, subtraction, multiplication and division \citep{mallinar2025emergence}.
We replicate those results with Gaussian and quadratic kernels on random 50\% training partitions for $p=61$ (out of $p^2$ total samples), which is sufficient for RFM to attain $100\%$ test accuracy. 
Figure \ref{fig:circulant_features} (first row) shows the feature matrices after training.

In contrast to random i.i.d. partitions, specific \textit{symmetric} partitions yield unexpected behaviors: (i) training on all samples except one (i.e. $\sim 99\%$ train fraction) prevents grokking and the model mispredicts the held-out point; (ii) when holding out two points, RFM generalizes \textit{only} if the pair of points breaks symmetry (i.e. both aren't fixed under a common reflection).
We formalize this notion of symmetry and experimentally demonstrate this below and provide extended results in Appendix \ref{sec:fixed_point_experiments}.\\


\begin{finding}
    When training RFM to learn modular arithmetic with symmetry group $D_{2p}$ for prime modulus $p$, RFM fails to generalize only when using train-test partitions that are invariant under the action of a non-singleton subgroup, $H$, of the symmetry group $G= \text{Sym}_f \cong D_{2p}$. On the other hand, when using random train-test partitions, which are not invariant under any subgroup of $G$, RFM is able to correctly classify test samples.
    \label{claim:breakingsymmetry}
\end{finding} 

Remarkably, for a symmetric train-test partition that yields $0\%$ test accuracy, moving just \textit{one} random training point into the test set (thereby breaking $H$-invariance) can improve test accuracy to $98\%$ in some cases (see Table \ref{tab:accuracy_vs_removed}).
Thus, \textit{less} training data can generalize better when the moved point(s) break symmetry.
To properly understand and characterize these behaviors, we use the symmetry group framework.


\begin{table}[]
\centering
\scalebox{0.85}{
\begin{tabular}{@{}lll@{}}
\toprule
\begin{tabular}[c]{@{}l@{}}Kernel /\\ (Task, Reflection)\end{tabular} &
  \begin{tabular}[c]{@{}l@{}}Num (Random) \\ Points Moved \\ From Train To Test\end{tabular} &
  \begin{tabular}[c]{@{}l@{}}Held-Out\\ Test \\ Accuracy\end{tabular} \\ \midrule
\begin{tabular}[c]{@{}l@{}}Gaussian\\ $(+, s)$\end{tabular} &
  \begin{tabular}[c]{@{}l@{}}0\\ 1\end{tabular} &
  \begin{tabular}[c]{@{}l@{}}0\%\\ 98\%\end{tabular} \\ \midrule
\begin{tabular}[c]{@{}l@{}}Quadratic\\ $(+, s)$\end{tabular} &
  \begin{tabular}[c]{@{}l@{}}0 \\ 100\\ 200\end{tabular} &
  \begin{tabular}[c]{@{}l@{}}0\%\\ 62\%\\ 91\%\end{tabular} \\ \midrule
\begin{tabular}[c]{@{}l@{}}Gaussian\\ $(\cdot, sr^{35})$\end{tabular} &
  \begin{tabular}[c]{@{}l@{}}0 \\ 10\\ 20\end{tabular} &
  \begin{tabular}[c]{@{}l@{}}0\%\\ 13\%\\ 98\%\end{tabular} \\ \midrule
\begin{tabular}[c]{@{}l@{}}Quadratic\\ $(\cdot, sr^{35})$\end{tabular} &
  \begin{tabular}[c]{@{}l@{}}0\\ 100\\ 200\end{tabular} &
  \begin{tabular}[c]{@{}l@{}}0\%\\ 65\%\\ 97\%\end{tabular} \\ \bottomrule
\end{tabular}
}
\caption{Test accuracy of RFM trained on all samples except the fixed points of a reflection $sr^k$, evaluated for different operations modulo $p=61$. The notation $(+,s)$ denotes addition with reflection $s$, and $(\cdot, sr^{35})$ denotes multiplication with reflection $sr^{35}$. Accuracy is reported both before and after randomly removing points from the training set. Under this setting, removing points can improve generalization.}
\label{tab:accuracy_vs_removed}
\end{table}


\paragraph{Symmetry group of modular addition.}
In this section, we characterize the symmetry group for modular addition.
To solve modular addition, we seek to learn the binary function \(f(a,b)=a+b =c\mod p\), with \(a,b,c \in \mathbb{Z}_p\). The symmetry group of \(f\) is:
$$
\begin{aligned}
\text{Sym}_f =  
&\:\{g \in \text{Aut}(\mathbb{Z}^2_p) \mid f(g(a,b))\\
&=f(a,b) \;\;\forall\;\; (a,b) \in \mathbb{Z}^2_p\}
\end{aligned}
$$
\begin{proposition}  The symmetry group of modular addition $f(a,b)=a+b \mod p$ is isomorphic to the dihedral group $D_{2p}$, with a rotation $r(a,b)=(a+1,b-1)$ and a reflection $s(a,b)=(b,a)$. The group has $2p$ elements, which can be classified into
$$
\begin{aligned}
&\text{Rotations:} \:\: r^k(a,b) = (a + k,\; b - k)\\
&\text{Reflections:} \:\:sr^k=(b - k,\;a + k)
\end{aligned}
$$
for $k \in \mathbb{Z}_p$. 
\end{proposition}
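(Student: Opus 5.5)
The plan is first to fix what $\text{Aut}(\mathbb{Z}_p^2)$ means, then to classify the invariant maps and check the dihedral relations. Literal group automorphisms of $\mathbb{Z}_p^2$ fix $(0,0)$, so they cannot contain $r$. Linear maps preserving $a+b$ also form a group of order $p(p-1)$, which is not $D_{2p}$. So I will read $\text{Aut}(\mathbb{Z}_p^2)$ as the automorphisms of the Cayley-table grid: bijections of $\mathbb{Z}_p^2$ that send rows to rows or columns, and columns to columns or rows. Concretely, these are the maps
\[
g(a,b)=(\sigma(a),\tau(b)) \quad\text{or}\quad g(a,b)=(\sigma(b),\tau(a)),
\]
with $\sigma,\tau$ permutations of $\mathbb{Z}_p$. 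This is the structure that one-hot encoding $e_a\Vert e_b$ sees: it amounts to permuting the coordinates inside each block, possibly after swapping the two blocks. I expect stating and justifying this interpretation to be the main obstacle. Once it is fixed, the remaining argument is elementary.

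\emph{Step 1 (non-swapping maps).} Suppose $g(a,b)=(\sigma(a),\tau(b))$ satisfies $\sigma(a)+\tau(b)=a+b$ for all $a,b$.
\begin{itemize}
\item Setting $b=0$ gives $\sigma(a)=a+k$, where $k:=-\tau(0)$.
\item Substituting back gives $\tau(b)=b-k$.
\end{itemize}
Hence $g=r^k$ with $r^k(a,b)=(a+k,b-k)$. Conversely, every $r^k$ preserves $a+b$, which yields exactly $p$ rotations.

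\emph{Step 2 (swapping maps).} Suppose $g(a,b)=(\sigma(b),\tau(a))$ is invariant. Then $s\circ g$ is non-swapping and invariant, since $s(a,b)=(b,a)$ preserves $a+b$. By Step 1, $s\circ g=r^k$ for some $k$, so $g=sr^k$. Computing, $sr^k(a,b)=s(a+k,b-k)=(b-k,a+k)$, which matches the stated formula and gives $p$ reflections. These are distinct from the rotations, since they swap coordinate roles. This gives $|\text{Sym}_f|=2p$.

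\emph{Step 3 (group structure).} I will verify the relations $r^p=\mathrm{id}$, $s^2=\mathrm{id}$ and $srs=r^{-1}$:
\[
srs(a,b)=sr(b,a)=s(b+1,a-1)=(a-1,b+1)=r^{-1}(a,b).
\]
The order of $r$ is exactly $p$, because $r^k$ is the identity only when $k\equiv 0$. Together with $|\text{Sym}_f|=2p$ from Steps 1 and 2, the presentation $\langle r,s\mid r^p=s^2=1,\ srs=r^{-1}\rangle$ is then realized faithfully, so $\text{Sym}_f\cong D_{2p}$. The involutions $sr^k$ are the reflections and the $r^k$ are the rotations, as claimed.
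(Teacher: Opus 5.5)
Your proof is correct, and it takes a different route from the paper's. The paper (Appendix A) works over a general Abelian $\mathcal A$. It takes $\mathrm{Sym}_f$ to be all $f$-preserving bijections and parametrizes each fibre by $\Phi_l(x)=(x,x^{-1}\star l)$. It then uses the rotations to write any $\varphi$ as $r^x\circ\psi$ with $\psi$ fixing $(e,e)$, and classifies that stabilizer as $\{\mathrm{id},s\}$ by comparing $\psi$ with the coordinatewise product $\mu$. Finally it checks $r^xr^y=r^{x\star y}$, $s^2=\mathrm{id}$ and $sr^xs=r^{x^{-1}}$ to get $\mathrm{Dih}(\mathcal A)$, and specializes to $D_{2p}$ for cyclic $\mathcal A$.

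You instead fix the ambient group up front as the grid automorphisms $(a,b)\mapsto(\sigma(a),\tau(b))$ or $(\sigma(b),\tau(a))$. These are exactly the block permutation matrices in which the paper's representation $\Pi$ takes values. A single substitution $b=0$ in $\sigma(a)+\tau(b)=a+b$ then pins down the non-swapping elements, and composing with $s$ handles the swapping ones.

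What your choice buys is that $|\mathrm{Sym}_f|=2p$ is actually true in your setting. For arbitrary $f$-preserving bijections the group is $\prod_l \mathrm{Sym}(O_l)$, of order $(p!)^p$. The paper's stabilizer step (``both lie in the same fibre, so their first coordinates agree'') tacitly assumes compatibility with $\mu$. Even granting that, maps such as $(a,b)\mapsto(ua,(1-u)a+b)$ with $u\neq 0,1$ fix $(0,0)$, preserve $a+b$ and are additive. So the stabilizer is $\{\mathrm{id},s\}$ only once one restricts to grid automorphisms, which is precisely your restriction.

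Your argument also never uses primality, and it extends verbatim to any Abelian $\mathcal A$: setting $b=e$ gives $\sigma(a)=a\star x$ and $\tau(b)=x^{-1}\star b$. So it recovers the paper's general $\mathrm{Dih}(\mathcal A)$ lemma as well. One minor aside on your motivating remark: for $p=3$ the group of linear maps preserving $a+b$ has order $6$ and is abstractly isomorphic to $D_6$. The point that matters is only that it does not contain $r$.
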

We present the proofs that these transformations keep the target function, $f$, invariant and correspond to the dihedral group $D_{2p}$ in Appendix \ref{sec:abelian_invariance_group}, along with the treatment using the \textit{generalized dihedral group} action which can be used for subtraction, multiplication, division, and other Abelian groups.

\paragraph{Fixed points.} The reflections $sr^k$ are \textit{involutions}, that is, when applied twice, they return the same element: $(sr^k)^2(a,b)=(a,b)$. For each $sr^k$, we can partition the data into two sets: the fixed points of $sr^k$ and reflection pairs under $sr^k$. A \textit{fixed point} of $sr^k$ is invariant under the reflection: $sr^k(a,b)=(a,b)$. All points that are not fixed have a pair under that reflection: $\{(a,b),sr^k(a,b)\}$.\\

\begin{proposition} Given all $p^2$ data samples for modular addition:
\begin{enumerate}
\item Every data point is fixed under exactly one reflection $sr^k$.
\item Every data point has a pair under every reflection $sr^{m}$ except for the reflection $sr^k$ under which it is fixed. 
\end{enumerate}
\end{proposition}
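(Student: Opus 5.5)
We work directly from the explicit formula for the reflections in the preceding proposition, $sr^k(a,b) = (b-k,\, a+k)$ with $k\in\mathbb{Z}_p$, and reduce both claims to solving a linear congruence modulo $p$.

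\emph{Part 1.} A point $(a,b)\in\mathbb{Z}_p^2$ is fixed by $sr^k$ iff $(b-k,\,a+k)=(a,b)$, that is, iff both $b-k\equiv a$ and $a+k\equiv b \pmod p$. The two congruences are equivalent, since each says $k\equiv b-a$. So the fixed-point condition reduces to the single congruence $k\equiv b-a \pmod p$. This has exactly one solution $k\in\mathbb{Z}_p$ for every $(a,b)$, so each data point is fixed under exactly one reflection, namely $sr^{\,b-a}$. Equivalently, the fixed points of $sr^k$ form the ``diagonal'' $\{(a,a+k): a\in\mathbb{Z}_p\}$, a set of $p$ points. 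These $p$ diagonals for $k\in\mathbb{Z}_p$ partition all $p^2$ samples, which gives a consistency check.

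\emph{Part 2.} For $m\neq k=b-a$, the point $(a,b)$ is not fixed by $sr^m$, by Part 1. We must also check that $sr^m$ genuinely pairs it with a distinct point whose partner is $(a,b)$ again. We verify that $sr^m$ is an involution: applying it twice gives $sr^m(b-m,a+m) = \bigl((a+m)-m,\,(b-m)+m\bigr) = (a,b)$. Hence $\{(a,b),\,sr^m(a,b)\}$ is a two-element orbit, and $(a,b)$ has a pair under $sr^m$. Under $sr^k$ the point is fixed, so it has no distinct partner. This gives exactly the exception stated.

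There is no substantial obstacle. The only point needing care is noting that the two coordinate conditions in the fixed-point equation collapse to one congruence, so that existence and uniqueness of $k$ both hold. This uses only that $\mathbb{Z}_p$ is a group under addition, not that $p$ is prime. The same argument would also extend to the generalized dihedral action on any Abelian group, with $k=a^{-1}\star b$.
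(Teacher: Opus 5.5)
Your proof is correct and matches the paper's: its appendix proves the general Abelian-group version by solving the fixed-point equation to get the unique $x=a^{-1}\star b$ (your $k=b-a$), then uses that $sr^x$ is an involution to place every non-fixed point in a 2-element orbit. Your closing remarks, that primality is not needed and that the argument extends to $\mathrm{Dih}(\mathcal{A})$, are exactly how the paper states the result.
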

We provide the proof of this claim in Appendix \ref{sec:symmetry_classes}. To obtain the expression for the fixed points for addition, it suffices to solve the fixed point equation: 
$$sr^k(a,b)=(b-k,a+k)=(a,b) \implies b = a+k.$$ 
For addition, the fixed points of $sr^k$ are the points $(a,a+k)$ for $a \in \mathbb{Z}_p$. 
In Appendix \ref{sec:fixed_points} we derive the formula for fixed points for the problem of addition of Abelian groups, which can be used to obtain the expression for the fixed points of all four modular arithmetic operations. 

\begin{figure}
\begin{center}
\includegraphics[width=0.92\linewidth]{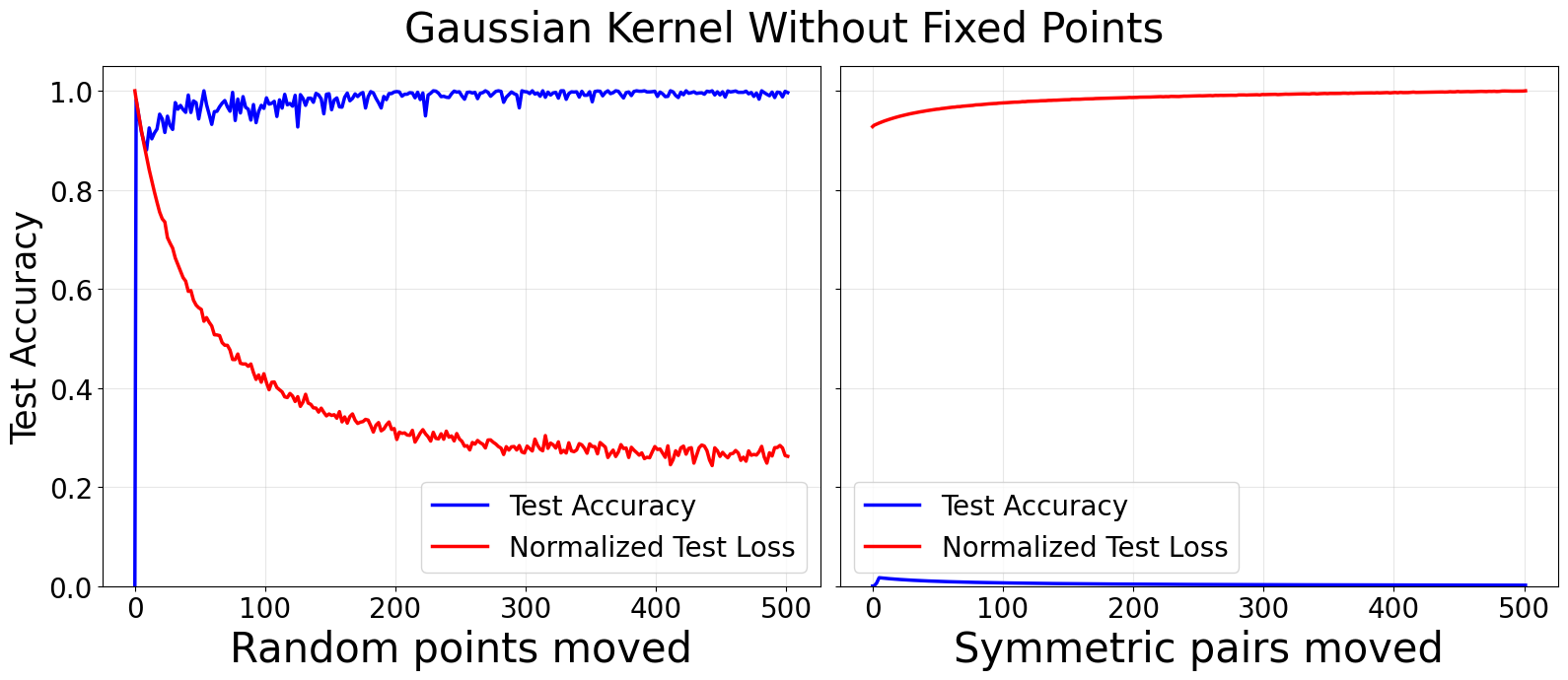}\\
\includegraphics[width=0.92\linewidth]{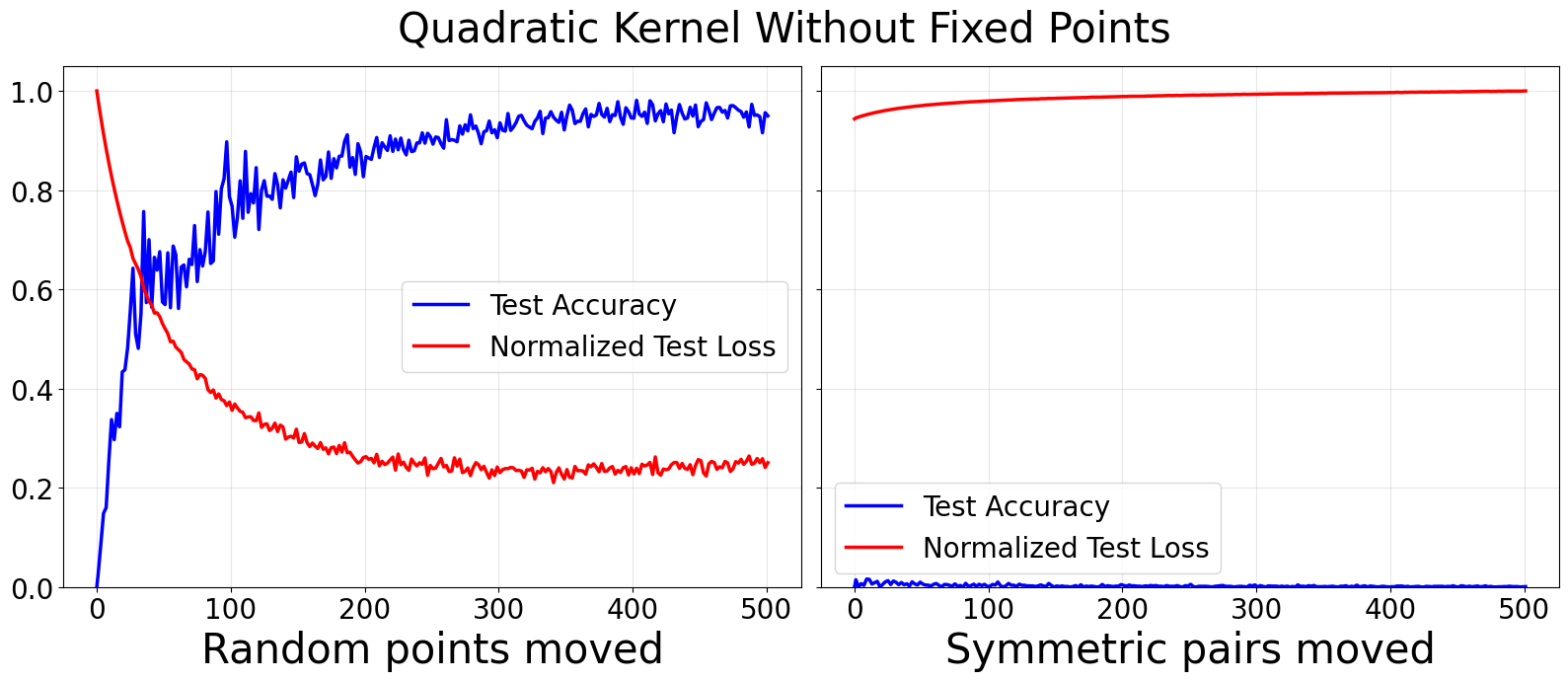}\
\captionof{figure}{For addition modulo $p=53$, we train Gaussian (top row) and quadratic kernels (bottom row) for 60 iterations without the fixed points under reflection $s$, with shape $(a,a)$, and move random points from train to test, which enables generalization. We also move points from train to test by symmetric pairs under the reflection $s$, which doesn't help with generalization. The loss is normalized to map the highest value to 1, and is added to show its evolution. Curves averaged over 5 independent runs.}
\label{fig:fixed_random_points_add}
\end{center}
\end{figure}

\subsection{Partitions that inhibit generalization} \label{sec:partitions}

We proceed to empirically verify Finding \ref{claim:breakingsymmetry}.
We first construct symmetric data partitions that inhibit generalization for RFM.
First, fix any reflection.
For exposition, we choose $s$ in this section and present results for other choices and operations, in Appendix \ref{sec:fixed_point_experiments} (results in this section hold under any such choice). 
We then train and test RFM on the following sets: 
\begin{itemize}
\item Test set ($X_{te}$): all samples encoding the fixed points of $s$, which are the pairs \((a,a)\) for $a \in \mathbb{Z}_p$.
\item Train set ($X_{tr}$): all remaining data samples. 
\end{itemize} 

Note that this training set is $H$-invariant with $H=\{\text{id},s\}$, which means that $h X_{tr}=X_{tr}$ for every $h\in H$. 
We see that RFM does not generalize to the fixed points in the test set, achieving near zero test accuracy (see Table \ref{tab:accuracy_vs_removed}).

It may be tempting to posit that moving fixed points from the test set into the train set would construct a generalizing partition, but we find that is not the case: $H$-invariance is not broken. 
As long as the test set is formed entirely by fixed points under the same reflection, RFM trained on this partition does not generalize. 
This is true for every reflection $sr^k$ and its relevant fixed points.
Empirical verification of this is given in Appendix \ref{sec:fixed_point_experiments}.


Notice that after removing all the fixed points under a reflection $sr^k$ from the train set, we can group all the train samples by pairs under that reflection $sr^k$ (see Appendix \ref{sec:symmetry_classes}), which results in a \textit{symmetric train set} under that reflection. In other words, all orbits under reflection $sr^k$ are complete: there are no points with missing pairs under $sr^k$. 
Since the fixed points under $sr^k$ have pairs under every other $sr^{m}$, this is not the case for any other reflection, as there would otherwise be points left without a pair.

\paragraph{Removing points from the train set asymmetrically. } 
We begin by constructing the symmetric partition as described above, training on all data samples except for the fixed points of $s$, which will form our test set. 
We then move samples at random from the train set to the test set, and train RFM on the resulting set. 
In Figure \ref{fig:fixed_random_points_add} we can see that moving just one point from the train set is enough to make the test accuracy go from near 0\% to 98\% for addition modulo 61 for the Gaussian kernel, which can now correctly predict labels that it previously could not. 
By removing one random sample, we break the $H$-invariance, and the samples in the train set can no longer be grouped by pairs under $s$: the paired point of the removed point is now left without its symmetric pair. 

Table \ref{tab:accuracy_vs_removed} shows how the test accuracy increases before and after removing random points. 
For addition modulo $61$, removing one point is enough to make the Gaussian kernel achieve almost perfect test accuracy, while the quadratic kernel is still around $60\%$ after removing 100 points. 
For multiplication modulo $61$, we remove the fixed points with respect to $sr^{35}$ instead of $s$. The quadratic kernel behaves very similarly, while the Gaussian kernel requires more points to be removed than the additive case (around 20) to achieve near perfect test accuracy.

\paragraph{Removing points symmetrically. } 
For our second experiment we begin with the symmetric partition as we used above, but instead of moving samples at random we move them by pairs. 
That is, if we randomly select and move a training sample \((a,b)\) to the test set, we also move its reflection pair \((b,a)\) under \(s\) (notice it is the same reflection we used to construct the partition). Moving points this way does not break the $H$-invariance, since you can still group all train samples by pairs under $s$.
Indeed, we can see in Figure \ref{fig:fixed_random_points_add} that moving reflection pairs does not have any effect on the test accuracy, which remains near zero. 
No matter how many symmetric pairs we move this way, neither the Gaussian nor the quadratic kernels generalize. 
These results are the same for Abelian groups (see Figure \ref{fig:fixed_random_points_abelian}).

We also saw in Figure \ref{fig:fixed_random_points_add} that, empirically, the quadratic kernel requires more random points to be removed to achieve the same improvement in test accuracy as the Gaussian kernel.
We leave it for future work to explore whether the quadratic kernel is more ``robust" to symmetry breaking, or whether bandwidth choices in the kernel (whether Gaussian or quadratic) can affect the robustness of the model to symmetry breaking.
Nevertheless, the principle of symmetry breaking in this way recovers generalization in both kernels, while on the other hand moving symmetric pairs does not.

\section{FEATURE MATRICES AND GENERALIZATION}\label{features_generalization}
In this section we examine more closely the nature of the features learned by RFM, and relate them to the group structure of the chosen train-test partition.
It was observed in previous work \citep{mallinar2025emergence} that grokking behavior was tied to the learned features being block-circulant (Figure \ref{fig:circulant_features}, first row). 
If we look at the feature matrices (as given by AGOP) learned for the non-generalizing partitions constructed in Section \ref{sec:partitions}, we find a different, but related, structure. 
We now relate these learned feature matrices from our constructed train-test partitions to the elements of the symmetry group.
We first review necessary preliminaries from representation theory.

\paragraph{Representation theory.}

Representation theory examines how the elements of a group can be represented as matrices acting on a vector space, preserving the structure of the group. Formally, a linear representation of a finite group \(G\) is a group homomorphism $\rho:G \to GL(V)$
where \(V\) is a vector space over \(\mathbb{R}\) or \(\mathbb{C}\), and \(GL(V)\) is the general linear group of invertible matrices on \(V\). 

Each group element \(g \in G\) is mapped to an invertible matrix \(\rho(g)\) that satisfies \(\rho(g_1 \star g_2) = \rho(g_1)\rho(g_2)\). Since we are working with one-hot encoded data, we work with a \textit{permutation representation} in which all group elements are encoded as permutation matrices.

\begin{figure}[t]
\centering
\includegraphics[width=0.92\linewidth]{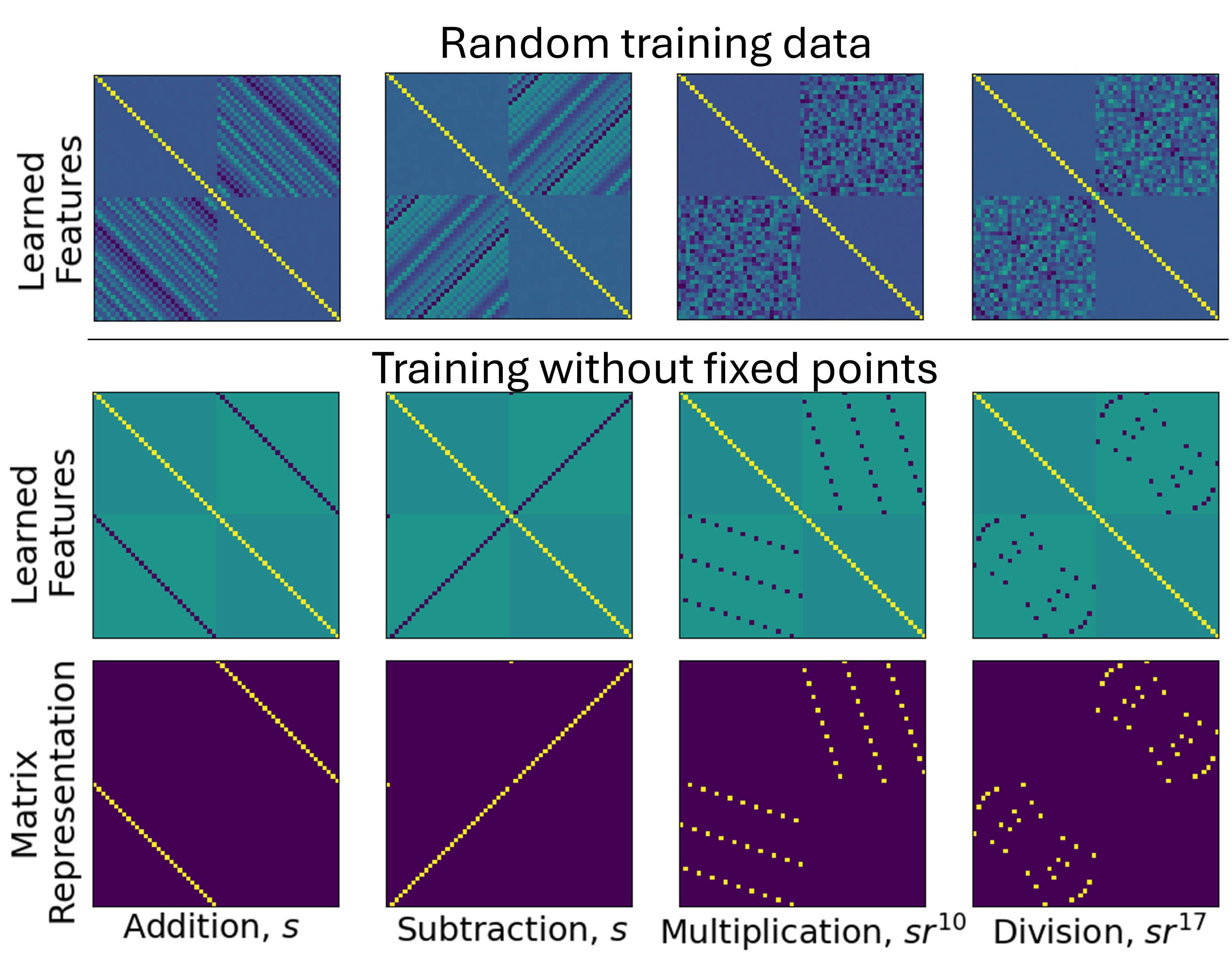}
\captionof{figure}{RFM with a Gaussian kernel on modular arithmetic mod $p = 29$ trained for 60 iterations. The first row shows the features learned when trained on a random partition for modular addition, subtraction, multiplication and division. We compare the AGOP learned by RFM after withholding the fixed points under a given reflection (second row) to the permutation representation of said reflection (third row).}
\label{fig:circulant_features}
\end{figure}

\paragraph{Permutation representation of the symmetry group. } 

We encode group elements $g \in G$ (with $|G|=n$), like \cite{gromov2023grokking, mallinar2025emergence}, as one-hot vectors $e_g \in \mathbb{R}^n$. Input data points are ordered pairs $(a,b) \in G \times G$, represented as the concatenation $x = e_a || e_b \in \mathbb{R}^{2n}$. Group actions on $(a,b)$ then correspond to linear transformations on $x$ implemented by block permutation matrices.

For the symmetry group of Abelian group addition, the generators are
$$
r^x(a,b) = (a \star x,b \star x^{-1}),
\:\:
s(a,b) = (b,a).
$$
Let $R_x \in \mathbb{R}^{n \times n}$ denote the permutation matrix for right multiplication by $x$. Then the associated permutation representation $\Pi: G \to \mathrm{GL}(2n,\mathbb{R})$ is given by
$$
\Pi(r^x) =
\begin{bmatrix}
R_x & 0 \\
0 & R_x^{-1}
\end{bmatrix},
\qquad
\Pi(s) =
\begin{bmatrix}
0 & I_n \\
I_n & 0
\end{bmatrix}.
$$
For subtraction and division, we consider the right-inverse operation problem, $\tilde{f}(a,b)=a\star b^{-1}$ (see Appendix \ref{inverse_composition}), with a slightly different representation (see Appendix \ref{sec:permutation_reps}).

If we train RFM on all data samples save for the fixed points under a reflection $sr^k$, we find that the learned feature matrix at the last iteration, rather than having the full circulant structure learned when trained on a random data partition, only has notable stripes in the points corresponding permutation representations of the reflection $sr^k$ and identity. Note that these two elements form the reflection subgroup $H=\{\text{id},sr^k\}\leq D_{2p}$. In Figure \ref{fig:circulant_features} we compare the learned feature (AGOP) matrices (second row of Figure \ref{fig:circulant_features}) with the theoretical permutation representations (third row of Figure \ref{fig:circulant_features}) of the corresponding reflections in four example cases.
We provide a complete set of such comparisons in Appendix \ref{sec:permutation_reps}.

\paragraph{Other dihedral subgroups.} In the previous experiments we restricted ourselves to the \textit{reflection} subgroups of $D_{2p}$ presented by $H=\langle sr^k\rangle$, which are dihedral subgroups of order 2. By changing $p$ for a non-prime number, the subgroup structure is much richer. We describe the form of the subgroups of the dihedral group in Appendix \ref{sec:subgroups_dihedral}. In short, for a divisor $d\mid p$, we can define the subgroup $\langle r^d, sr^m\rangle$ with a compatible $m$, which has elements $H=\{\text{id},r^d, r^{dk}, \dots, sr^m,sr^{m+d}, sr^{m+dk}, \dots\}$. For example, in addition modulo 32, we have the subgroup $H=\langle r^{16},s\rangle=\{\text{id},r^{16},s,sr^{16}\} \cong D_4$.

If we select the fixed points under every reflection in one of the dihedral subgroups, add them to the test set, and put the rest of samples in the train set, we get an $H$-invariant partition analogous to the \textit{fixed points} partitions we used in our previous experiments. We show this is the case in Appendix \ref{sec:fixed_points}, where we prove the fact that the fixed points under $s$ with same label form a pair under $sr^{16}$ and viceversa. Indeed, the fixed points under $sr^k$ form a pair under $sr^{k-p/2}$, which explains the structure in our partition. We verify this experimentally in Figure \ref{fig:dihedral_subgroups} for the subgroups $H=\langle s\rangle$, $H=\langle r^{16},s\rangle$, $H=\langle r^{8},s\rangle$, $H=\langle r^{4},s\rangle$ (see Appendix \ref{sec:other_subgroups} for additional discussion).

\begin{figure}[t]
    \centering
    \includegraphics[width=1\linewidth]{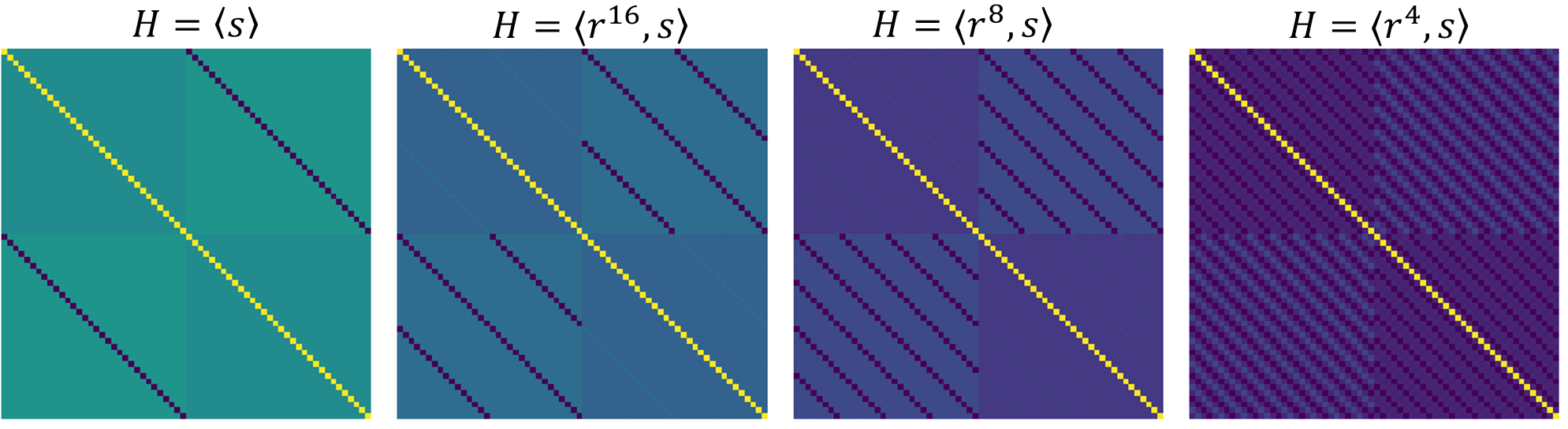}\
    \captionof{figure}{AGOPs learned by RFM with a Gaussian kernel trained on addition mod 32 on all data samples except for the fixed points of all the reflections $sr^k$ of the dihedral subgroups, in order from left to right: $H=\langle s\rangle$, $H=\langle r^{16},s\rangle$, $H=\langle r^{8},s\rangle$, $H=\langle r^{4},s\rangle$. RFM doesn't generalize to the withheld points in any of these settings.}
    \label{fig:dihedral_subgroups}
\end{figure}



\paragraph{RFM generalizes to the orbit of learned symmetries.}
Thus far, we have considered the RFM algorithm with initial features $M_0 = I_{2p}$, meaning that we have not imposed any additional structure on the data at the start of RFM.
By constructing train-test partitions according to symmetries under a fixed reflection, we found that RFM does not generalize to held-out fixed points under that reflection and we found that the learned feature matrices align with the structure of the subgroup $H=\{\text{id},sr^k\}$, exhibiting support primarily on the permutation representations of the subgroup elements. By breaking data symmetry, we recover generalization in RFM.

We now consider the inverse of this setup.
Through our theoretical framework, we show that we can predict the test set points that will be correctly classified by RFM at convergence when we initialize the feature matrix with a prior structure given by any fixed reflection, and use random train-test partitions. Specifically, by forcing the feature structure at initialization to follow a specific reflection, RFM is unable to \textit{escape the symmetry}, and will only generalize to points within the orbit of that reflection.\\

\begin{finding} \label{claim:generalization}
Let $f(a,b)=(a+b) \mod p$ for $p$ prime with symmetry group $ \text{Sym}_f\cong D_{2p}$, let $M_0 \in \mathbb{R}^{2p\times2p}$ be the AGOP learned by RFM when trained on all the data except the fixed points under a reflection $sr^k$. RFM with a Gaussian kernel trained on a random train split $X_{tr} \subseteq\mathbb{Z}_p^2$ and initialized with $M_0$ will:
\begin{enumerate}
\item Memorize the training data $X_{tr}$;
\item Only learn the subgroup $H=\{\text{id},sr^k\}\leq D_{2p}$ imposed by $M_0$.
\end{enumerate}
Then, RFM will only classify correctly the points in $\text{Orbit}_H(X_{tr})$.
\end{finding}

\begin{figure}[t]
    \centering
    \includegraphics[width=0.75\linewidth]{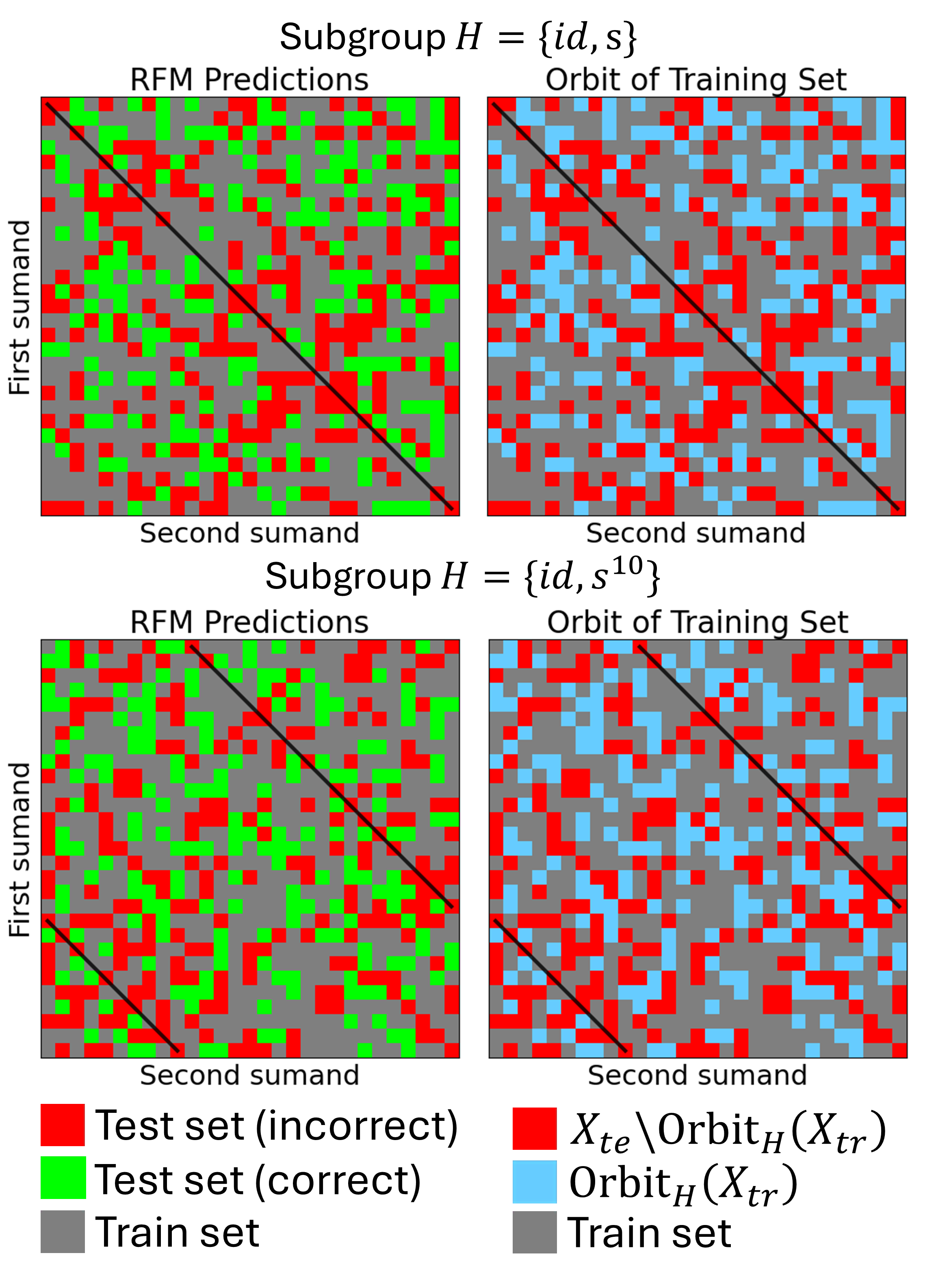}\
    \caption{We train RFM with a Gaussian kernel on addition modulo $p=29$ on 50\% of the data with $M_0$ encoding the subgroup $H=\{\text{id},s\}$ (top row) and $H\{\text{id},s^{10}\}$ (bottom row). The reflection axis is marked by the black line. There is a perfect match between the correct predictions and our theoretical prediction (Finding \ref{claim:generalization}).}
    \label{fig:orbitpredictions}
\end{figure}

Training RFM on a random partition with $M_0$ encoding $H=\{\text{id},sr^k\}$, we can compute $\text{Orbit}_{H}(X_{tr})$ and compare it to the correct guesses in $X_{te}$. We empirically verify this in Figure \ref{fig:orbitpredictions}.
We show the Cayley tables for addition mod 29 where the random training set elements are highlighted in grey, the incorrectly classified elements in the test set are higlighted in red. 

Those correctly classified are highlighted in green in the tables in the first column, while our theoretical predictions on which elements RFM will correctly classify based on the orbit of the subgroup encoded in $H$ are highlighted in blue in the tables in the second column. Empirically, we find a perfect match (100\% precision, 100\% recall) between the theoretical prediction and the empirical results. 

We chose $H=\{\text{id},s\}$ for the tables in the top row. This corresponds to a reflection along the main diagonal. The points along the diagonal are the fixed points, and they are either in the train set or incorrectly classified. The incorrect predictions are symmetric with respect to the diagonal: if $(a,b)$ is incorrectly classified, so will be $(b,a)$ for all $a,b\in\mathbb{Z}_p$.

Confirming Finding \ref{claim:generalization}, if $(a,b)$ is in the train set, $(b,a)$ will be correctly classified  for all $a,b\in\mathbb{Z}_p$.
For the tables in the bottom row, we have $H=\{\text{id},s^{10}\}$, and the same applies, but the reflection axis is the 10th cyclic diagonal (marked by the black line).

\section{RELATED WORK}

\paragraph{Grokking and feature learning in neural networks.}
Recent work has analyzed the dynamics of grokking in neural networks and identified feature learning as a core mechanism behind the phase transition \citep{mlpGrokking22,lyu2023dichotomy,gromov2023grokking,liu2023omnigrokgrokkingalgorithmicdata}.
These works attempt to explain grokking via margin maximization \citep{morwani2024featureemergencemarginmaximization}, mechanistic interpretability of transformers \citep{nanda2023progress}, circuit efficiency \citep{varma2023explaininggrokkingcircuitefficiency}, and transitions from lazy to rich learning \citep{kumar2024grokking}, to name a few.
Much of this work focuses on analyzing the optimization dynamics of neural networks or relies on interpretations of specific neural network architectures.
In our work, we study the group structure of the data, agnostic to neural networks and optimization choices for neural models, through the lens of learned transformations \textit{on the data itself} as understood through RFM and AGOP.


\paragraph{Recursive Feature Machines and AGOP.}

A new direction of research has connected AGOP to the mechanisms of feature learning in neural networks.
These works have shown that RFM recovers many intriguing feature learning phenomena that neural networks exhibit, such as low-rank matrix recovery \citep{LinearRFM}, neural collapse \citep{agopNC}, feature learning in neural networks \citep{BeagleholeNFA,rfm_science,convRFM}, emergence and grokking \citep{mallinar2025emergence}, and the benefit of catapults in the loss landscape \citep{catapultsAGOP}, to name a few.
Thus, RFM and AGOP makes a compelling framework with which to study feature learning for specific tasks.
Our analysis builds on this line of work by studying the way in which RFM learns features from data that follows Abelian group structure.

\paragraph{Group theory and learning algebraic tasks.}


Some prior works have discussed learning algebraic tasks from a group-theoretic perspective.
\citep{mlpGrokking22,morwani2024featureemergencemarginmaximization, stander2024grokkinggroupmultiplicationcosets, shutman2025learningwordsgroupsfusion} study grokking and generalization in neural networks on group operations. \cite{mohamadi2024why} propose that permutation equivariant kernel methods cannot generalize on modular arithmetic tasks, and that neural networks grok when they leave the kernel regime and learn features.
These prior works analyze the group whose operation they are trying to learn, rather than the symmetry group of the target function, whereas our work focuses more on the data and target function than the model itself.
Following a similar path, \citet{perin2025abilitydeepnetworkslearn} analyze a rotated version of the MNIST dataset and argue that neural networks do not generalize to data with algebraic structure unless the symmetries found in the data are imposed on the neural architecture (or equivalent Neural Tangent Kernel).
They additionally relate the ability to generalize to the separation and density of class orbits.

\section{DISCUSSION }

In this work, we focused almost exclusively in the symmetries and properties of the data, and found that data symmetry alone is sufficient to explain and predict the generalization behavior of feature learning kernels. Our findings highlight the importance that training data has on the behavior of a model, and show that data partitions with the same amount of data can yield extremely different results when fed into a model. Under the framework defined in Section \ref{features_generalization}, the results from Section~\ref{sec:breaking} can be simply explained. 
After removing the fixed points of $sr^k$, the learned feature matrices align with the structure of the subgroup $H=\{\text{id},sr^k\}$. Thus, RFM only \textit{learns} the reflection $sr^k$, and since the fixed points of $sr^k$ don't have pairs under the reflection, they aren't in $\text{Orbit}_{H}(X_{tr})$, which explains the 0\% test accuracy. 
Note that in this setup of Section \ref{sec:breaking}, if we initialize RFM with $M_0$ encoding a different reflection $sr^m$ with $m\neq k$, instead of the identity, we recover 100\% test accuracy on the test set of fixed points under $sr^k$ (see Appendix \ref{sec:generalization_experiments}). 

When discussing symmetry breaking, we mention the concept of the train set being invariant under the action of a group $H$. However, $H$-invariance is not the full picture. Breaking $H$-invariance in the data is not necessary to generalize outside of $H$. For example, imposing structure on the model features is enough to restore generalization (see Appendix \ref{sec:generalization_experiments}), and different kernels, parameters or models may exhibit different behavior on the same data partitions. Theoretical proofs and concepts are needed to further formalize the concept of \textit{symmetry traps} presented in this work.

Moreover, the class of problems we study, binary addition in Abelian groups, is characterized by target functions that are fully described by their symmetry groups. In other words, any two samples with the same label can be related by an invertible transformation. Most datasets don't have this property, some target-functions lacking a meaningful symmetry group altogether (MNIST, for example). A deeper understanding of the differences between these regimes may shed some light into why delayed generalization (grokking) is observed in this class of binary operations, and why certain machine learning models struggle to learn symmetries, as observed in \cite{perin2025abilitydeepnetworkslearn}.



The goal of this work was to clarify the issues and set up hypotheses which future work can prove theoretically. Future work may formalize the correspondence between symmetry breaking in RFM (from a permutation-equivariant kernel to a problem-aligned kernel) and the transition from lazy to rich regimes in neural networks, generalizing these results to other models and problems. Additionally, in this work we focused on addition for Abelian groups, since the symmetry group of this problem can be easily described. Understanding the symmetry group for the binary operation of general finite groups problem may extend these ideas to characterize the behavior on other groups.

\subsection*{Acknowledgements}

MT acknowledges the CFIS Mobility Program for the partial funding of this research work, particularly Fundació Privada Mir-Puig, CFIS partners (G-Research, Qualcomm and Semidynamics), and donors of the CFIS crowdfunding program. 
NM acknowledges funding and support for this research from the Eric and Wendy Schmidt Center at The Broad Institute of MIT and Harvard.
The authors would like to additionally thank Daniel Beaglehole and Gil Pasternak for useful discussions during the preliminary phases of this research.

We gratefully acknowledge support from  the National Science Foundation (NSF)
under grants CCF-2112665 and MFAI 2502258, the Office of Naval Research
(ONR N000142412631) and the Defense Advanced Research Projects Agency
(DARPA) under Contract No. HR001125CE020. 
This work used the Delta system at the National Center for Supercomputing Applications through allocation TG-CIS220009 from the Advanced Cyberinfrastructure Coordination Ecosystem: Services \& Support (ACCESS) program, which is supported by National Science Foundation grants \#2138259, \#2138286, \#2138307, \#2137603, and \#2138296.


\bibliography{refs}







\clearpage
\appendix
\thispagestyle{empty}

\onecolumn
\aistatstitle{Supplementary Materials}


\section{Symmetry groups}  \label{sec:abelian_invariance_group}
We study the case of addition of Abelian groups. We have a binary function \(f(a,b)=a \star b =c\), with \(a,b,c \in \mathcal{A}\), where $(\mathcal A,\star)$ is an Abelian group. Here, the group action \(G\) that acts on \(X= \mathcal{A} \times \mathcal{A}\) and keeps \(f\) invariant looks like:
\[
\text{Sym}_f = \{g \in \text{Aut}(\mathcal{A}\times\mathcal A) \mid f(g(a,b)) = f(a,b) \;\;\forall\;\; (a,b) \in \mathcal{A} \times \mathcal{A}\}.
\]
The action we consider is $\mathrm{Dih}(\mathcal A)$, the \textit{generalized dihedral group} of $\mathcal{A}$. This action mimics the dihedral group's behavior, but the cyclic group of rotations is replaced with an arbitrary Abelian group $\mathcal A$. Formally, it is defined as $\mathrm{Dih}(\mathcal A) \;=\; \mathcal A \rtimes C_2 ,\:\: s x s = x^{-1}$, that is, the semidirect product of $\mathcal A$ with $C_2=\{1,s\}$, where $s$ acts by inversion.\\



\begin{lemma}
Let $\mathcal{A}$ be an Abelian group. The symmetry group of the function $f(a,b)=a\star b$ for $a,b\in\mathcal{A}$ is composed by the following maps on $\mathcal A\times \mathcal A$:\\
\begin{enumerate}
    \item Generalized rotation: \(r^x(a,b) = (a\star x,\; b\star x^{-1}), \quad x\in\mathcal A\).
    \item Reflection: \(s(a,b)=(b,a)\).
    \item Reflections: \(sr^x(a,b)=(b\star x^{-1},\;a\star x)\).\\
\end{enumerate} 
\end{lemma}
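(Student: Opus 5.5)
The plan is to verify directly that each listed map is a bijection of $\mathcal A\times\mathcal A$ that leaves $f$ invariant, and then to show that together they form a group isomorphic to $\mathrm{Dih}(\mathcal A)$. Here ``automorphism'' of $X=\mathcal A\times\mathcal A$ is read as a bijection of the underlying set, as in the definition of $\text{Sym}_f$.

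\textbf{Step 1 (invariance).} Commutativity and associativity of $\star$ give the two identities we need:
\[
f(r^x(a,b))=a\star x\star b\star x^{-1}=a\star b, \qquad f(s(a,b))=b\star a=a\star b.
\]
For $sr^x$ we can either compute directly, $f(sr^x(a,b))=b\star x^{-1}\star a\star x=a\star b$, or note that it is a composite of invariant maps and apply closure.

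\textbf{Step 2 (bijectivity and closure).} Each map has an explicit inverse: $(r^x)^{-1}=r^{x^{-1}}$, $s^2=\mathrm{id}$, and $(sr^x)^2=\mathrm{id}$. The last of these is checked from
\[
sr^x(b\star x^{-1},a\star x)=(a\star x\star x^{-1},\;b\star x^{-1}\star x)=(a,b).
\]
Next we compute the composition rules. We have $r^xr^y=r^{x\star y}$, so $x\mapsto r^x$ is an injective homomorphism $\mathcal A\to\text{Sym}_f$; it is injective because $r^x=\mathrm{id}$ forces $a\star x=a$, hence $x=e$. We also have $s r^x s=r^{x^{-1}}$, since
\[
sr^xs(a,b)=sr^x(b,a)=s(b\star x,\,a\star x^{-1})=(a\star x^{-1},\,b\star x).
\]
Finally, the composite $s\circ r^x$ agrees with the stated formula for $sr^x$, because
\[
s(r^x(a,b))=(b\star x^{-1},a\star x).
\]
Hence the set $\{r^x\}\cup\{sr^x\}$ is closed under composition and inverses, so it is a subgroup of $\text{Sym}_f$.

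\textbf{Step 3 (identification).} Since $s\notin\{r^x\}$ whenever $|\mathcal A|>1$ (as $r^x(a,a)=(a,a)$ forces $x=e$), the subgroup is $\langle r^x\rangle\rtimes\langle s\rangle$ with $s$ acting by inversion. It therefore has order $2|\mathcal A|$ and is isomorphic to $\mathrm{Dih}(\mathcal A)$; when $\mathcal A=\mathbb Z_p$ this gives $D_{2p}$, recovering the earlier Proposition.

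\textbf{Main obstacle.} The delicate point is the reading of ``the symmetry group is composed by'' these maps. If $\text{Aut}(X)$ means all set bijections, then $\text{Sym}_f$ is much larger: it consists of all permutations preserving each fiber $f^{-1}(c)$, a product of $|\mathcal A|$ symmetric groups $S_{|\mathcal A|}$. So equality cannot hold in that sense. I would therefore prove the lemma as stated, namely that these maps lie in $\text{Sym}_f$ and form the group $\mathrm{Dih}(\mathcal A)$.

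If full equality is desired, I would restrict to bijections respecting the group structure, meaning affine maps $g(a,b)=\phi(a,b)+(u,v)$ with $\phi\in\mathrm{Aut}(\mathcal A^2)$ written additively. I would then try to show that invariance forces $g$ to be one of the listed maps. Evaluating at $(e,e)$ gives $u\star v=e$. Evaluating along the fibers $a\star b=e$ then constrains $\phi$ to be the identity or the swap composed with a translation. For general $\mathcal A$, however, automorphisms of $\mathcal A$ such as inversion interact with this argument, so I expect this characterization to be the hard step. I would state the weaker containment explicitly rather than claim equality.
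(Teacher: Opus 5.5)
Your argument is correct for what it claims, but it proves only the easy inclusion, and there it coincides with the paper. Your Step~1 is the paper's check that $r^x\in\text{Sym}_f$. Your Steps~2--3 are essentially the paper's proof of the \emph{next} Theorem, via the relations $r^xr^y=r^{x\star y}$ and $sr^xs=r^{x^{-1}}$. The paper's proof of the lemma itself aims at the reverse inclusion. It writes $\varphi=r^x\circ\psi$ with $\psi$ fixing $(e,e)$, claims this stabilizer is $\{\mathrm{id},s\}$, and concludes the boxed equality.

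Your objection is right, and it points to a real error in that argument:
\begin{itemize}
\item The paper infers that $\psi(xy,y^{-1}x^{-1})$ and $\mu(\psi(x,x^{-1}),\psi(y,y^{-1}))$ have equal first coordinates because both lie in $f^{-1}(e)$. This is a non sequitur, since two points of one fibre need not coincide. The step tacitly assumes $\psi$ is a homomorphism for $\mu$, and the same assumption produces the ``key identity'' $\sigma_{ab}(x)=\sigma_a(x)\sigma_b(e)$.
\item The step ``$\sigma_e$ respects inverses, hence $\sigma_e\in\{\mathrm{id},x\mapsto x^{-1}\}$'' is also invalid, since every automorphism respects inverses.
\end{itemize}
For set bijections on finite $\mathcal A$, $\text{Sym}_f\cong\prod_c\mathrm{Sym}(f^{-1}(c))$ has order $(n!)^n$, which exceeds $2n$ once $n=|\mathcal A|\ge 3$. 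So settling for $\mathrm{Dih}(\mathcal A)\le\text{Sym}_f$ is the correct call. Your ``equality cannot hold'' does need the caveat $|\mathcal A|\ge 3$, because for $|\mathcal A|\le 2$ the two groups coincide.

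Restricting to affine maps does not rescue equality, so it is not merely the hard step. Take $\mathbb Z_p$ with $p\ge 3$ and the map $(a,b)\mapsto(2a+b,\,-a)$. It is a group automorphism of $\mathbb Z_p^2$, it preserves $a+b$, and it fixes $(0,0)$. Yet it sends $(1,0)\mapsto(2,-1)$, which rules out both $\mathrm{id}$ and $s$, the only elements of $\mathrm{Dih}(\mathbb Z_p)$ fixing $(0,0)$.

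A restriction that does give equality is to bijections respecting the product structure up to a swap: $g(a,b)=(\alpha(a),\beta(b))$ or $g(a,b)=(\alpha(b),\beta(a))$. These are exactly the maps realized by block permutation matrices acting on $e_a\|e_b$, the form the paper later uses for $\Pi$. In the first case, evaluate $\alpha(a)\star\beta(b)=a\star b$ at $b=e$, at $a=e$, and at $a=b=e$. This forces $\alpha(a)=a\star x$ and $\beta(b)=b\star x^{-1}$ with $x=\alpha(e)$, i.e.\ $g=r^x$. The swapped case reduces to this through $g\circ s$, giving $g=r^xs=sr^{x^{-1}}$. Adding this short argument would turn your containment into a correct version of the equality the lemma intends.
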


\begin{proof}
Let $\text{Sym}_f=\{\varphi:\mathcal{A}\times\mathcal{A} \to \mathcal{A} \times \mathcal{A} \:\:\text{bijective}\mid f \circ \varphi = f\}$. That is, $\text{Sym}_f$ consists of bijections of $\mathcal{A}\times\mathcal{A}$ that preserve the value of the group operation $f(a,b)=a\star b$. For each $l\in\mathcal{A}$, the fibre $f^{-1}(l)=\{(a,b) \in \mathcal{A} \times \mathcal{A} \mid a\star b=l\}$ is in bijection with $\mathcal{A}$ via 
$$
\Phi_l:\mathcal{A}\to f^{-1}(l), \quad \Phi_l(x)= (x,x^{-1}\star l),
$$
since $f(x,x^{-1}\star l)=l$. Any $\varphi\in \text{Sym}_f$ permutes fibres, so, given an $l$, it restricts to a permutation of $f^{-1}(l)$, which gives us a bijection $\sigma_l:\mathcal{A} \to \mathcal{A}$ such that
$$
\varphi(x, x^{-1} \star l)=(\sigma_l(x),\sigma_l(x^{-1})\star l)
$$
Write $\varphi(e,e)=(x,x^{-1})$ for some $x \in \mathcal{A}$. Consider $\psi:= r^{x^{-1}} \circ \varphi$, with $r^{x}(a,b)=(a\star x, b \star x^{-1})$. We have $\psi \in \text{Sym}_f$, since $r^x \in \text{Sym}_f$:
\[
\begin{aligned}
f\big(r^x(a,b)\big)
&=f\big(a\star x,\; b\star x^{-1}\big)
=(a\star x)\star(b\star x^{-1})\\
&=a\star x\star b\star x^{-1}
\overset{\text{(commutativity)}}{=}a\star b\star x\star x^{-1}
=a\star b
= f(a,b).
\end{aligned}
\]
It follows that $\psi(e,e)=r^{x^{-1}}(\varphi(e,e))=r^{x^{-1}}(x,x^{-1})=(e,e)$. Thus every $\varphi\in \text{Sym}_f$ can be written as $\varphi=r^x \circ \psi$, with $\psi \in \text{Sym}_f$ fixing $(e,e)$. It suffices to classify the elements of the stabilizer
$$
H:=\{\psi \in \text{Sym}_f \mid \psi(e,e)=(e,e)\}.
$$
Take $\psi \in H$. Thus, we have a family $(\sigma_l)_{l\in \mathcal{A}}$ such that $\psi(x,x^{-1}l)=(\sigma_l(x),\sigma_l(x^{-1})l)$. Since $\psi(e,e)=(e,e)$, we have $\sigma_e(e)=e$. We write $a \star b=ab$ for ease of notation. Define the map 
$\mu : (\mathcal{A} \times \mathcal{A})^2 \to \mathcal{A}^2$
by $\mu((u,v),(u',v')) = (u\star u', v'\star v)$ to model the group operation on $\mathcal{A} \times \mathcal{A}$.

Take $\psi(x,x^{-1})=(\sigma_e(x),\sigma_e(x)^{-1})$ and $\psi(y,y^{-1})=(\sigma_e(y),\sigma_e(y)^{-1})$. Then $\mu((x,x^{-1})(y,y^{-1}))=(xy,y^{-1}x^{-1})$, and $\psi(xy,y^{-1}x^{-1})=(\sigma_e(xy),\sigma_e(xy)^{-1})$. Since both $\psi(xy, y^{-1}x^{-1})$ and $ \mu(\psi(x, x^{-1}), \psi(y, y^{-1}))$ lie in the same fiber $f^{-1}(e)$, and $\Phi_e$ is a bijection, it follows that their first coordinates must be equal. Thus, we have either $\sigma_e(xy)=\sigma_e(x)\sigma_e(y)$ or $\sigma_e(xy)=\sigma_e(y)\sigma_e(x)$.

Hence $\sigma_e$ is a an automorphism $\mathcal{A} \to \mathcal{A}$ or is the map $x \mapsto x^{-1}$. Finally, from $\psi(x,x^{-1})=(\sigma_e(x),\sigma_e(x)^{-1})$ it follows that $\sigma_e$ respects inverses; which gives the only two possibilities:
\[
\sigma_e = \mathrm{id} \qquad \text{or} \qquad \sigma_e : x \mapsto x^{-1}.
\]
Consider the three points:
$$
P=(x,x^{-1}a) \in f^{-1}(a), \quad Q=(e,b) \in f^{-1}(b), \quad R=(x,x^{-1}ab) \in f^{-1}(ab).
$$
Using commutativity of $\mathcal{A}$, it follows that $\mu(P, Q) = R$. Apply $\psi$ and compute 
$$\mu\big(\psi(P), \psi(Q)\big) = \big(\sigma_a(x)\sigma_b(e), (\sigma_b(e)^{-1}b)(\sigma_a(x)^{-1}a)\big)=\big(\sigma_a(x)\sigma_b(e), \sigma_b(e)^{-1}\sigma_a(x)^{-1}ab\big).
$$
Both $\psi(R)$ and $\mu(\psi(P), \psi(Q))$ lie in the same fibre $f^{-1}(ab)$, and by the bijection $\Phi_{ab}$, two elements of that fibre are equal iff their first coordinates coincide. Comparing first coordinates yields the key identity $\ \sigma_{ab}(x) = \sigma_a(x)\sigma_b(e)\ (\forall a,b,x)$, which evaluated at $x=e$ gives $\sigma_{ab}(e) = \sigma_a(e)\sigma_b(e)$, and for $a=e$ gives $\sigma_b(x)=\sigma_e(x)\sigma_b(e)$

\begin{itemize}
    \item If $\sigma_e = \mathrm{id}$, then $\sigma_b(x) = x\sigma_b(e)$. Setting at $x = e$ yields $\sigma_e(b) = \sigma_b(e)$. But $\psi(e, e) = (e, e)$ implies $\sigma_b(e) = e$ for every $b$, so $\sigma_b = \mathrm{id}$ for all $b$. Hence $\psi = \mathrm{id}$.

    \item If $\sigma_e(x) = x^{-1}$, then $\sigma_b(x) = x^{-1} \sigma_b(e)$. Direct verification shows this is exactly the action of the swap $s(a,b)=(b,a)$ on the parametrization: $s(x, x^{-1}b) = (x^{-1}b, x)$, so $\psi = s$.
\end{itemize}

Thus 
\[
H = \{\mathrm{id}, s\}.
\]
Since every $\varphi \in \mathrm{Sym}_f$ decomposes as $\varphi = r^x \circ \psi$ with $x \in \mathcal{A}$ and $\psi \in H$, we obtain the desired description
\[
\boxed{\ \mathrm{Sym}_f = \{r^x \mid x \in \mathcal{A}\} \ \cup\ \{s \circ r^x \mid x \in \mathcal{A}\} \ }.
\]
\end{proof}

\begin{theorem}
Let $\mathcal{A}$ be an Abelian group. The symmetry group of $f(a,b)=a\star b$ for $a,b \in \mathcal{A}$ is isomorphic to the generalized dihedral group:
\[
\text{Sym}_f \;\cong\; \mathrm{Dih}(\mathcal A) \;=\; \mathcal A \rtimes C_2.
\]
\end{theorem}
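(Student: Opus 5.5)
The plan is to take the preceding lemma as the description of the elements of $\text{Sym}_f$: every element has the form $r^x$ or $s\circ r^x$ with $x\in\mathcal A$. What remains is to identify the group structure under composition with $\mathcal A\rtimes C_2$, where $s$ acts on $\mathcal A$ by inversion. I will define
\[
\Theta:\mathcal A\rtimes C_2\to \text{Sym}_f,\qquad \Theta(x,1)=r^x,\quad \Theta(x,s)=s\circ r^x,
\]
and show that $\Theta$ is a bijective homomorphism.

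First I collect the composition rules, which are direct coordinate computations on $(a,b)$:
\begin{itemize}
\item $r^x\circ r^y=r^{x\star y}$, since $(a\star y\star x,\ b\star y^{-1}\star x^{-1})=(a\star(x\star y),\ b\star(x\star y)^{-1})$ by commutativity. This makes $x\mapsto r^x$ a homomorphism from $\mathcal A$.
\item $s^2=\mathrm{id}$.
\item $s\circ r^x\circ s=r^{x^{-1}}$. Indeed, $s r^x s(a,b)=s r^x(b,a)=s(b\star x,\ a\star x^{-1})=(a\star x^{-1},\ b\star x)=r^{x^{-1}}(a,b)$.
\end{itemize}
These are exactly the defining relations of $\mathrm{Dih}(\mathcal A)$. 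Writing the multiplication in $\mathcal A\rtimes C_2$ as $(x,\epsilon)(y,\delta)=(x\star \epsilon(y),\epsilon\delta)$, with $\epsilon(y)=y$ or $y^{-1}$, I then check the homomorphism property in the four cases $\epsilon,\delta\in\{1,s\}$. Each case reduces to the three rules. For example, $(s r^x)(s r^y)=s r^x s r^y=r^{x^{-1}}r^y=r^{x^{-1}\star y}$, which matches $(x,s)(y,s)=(x^{-1}\star y,1)$ once the order convention is fixed. I will choose the convention, for instance $\Theta(x,s)=r^{x}\circ s$ or $s\circ r^x$, so that all four cases match. Where it helps, I will use $s r^x=r^{x^{-1}}s$ to move $s$ to one side.

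Next, surjectivity follows directly from the lemma. For injectivity, the maps $r^x$ are pairwise distinct, since $r^x(e,e)=(x,x^{-1})$. A rotation and a reflection never coincide: $r^x=s\circ r^y$ would force $r^{x}(a,b)$ to equal $(b\star y^{-1},a\star y)$ for all $(a,b)$, i.e.\ $a\star x=b\star y^{-1}$ for all $a,b$, which is impossible unless $|\mathcal A|=1$. The edge case where $\mathcal A$ is trivial or $|\mathcal A|\le 2$ needs separate treatment. There $s$ may coincide with a rotation as a map; for example, when $\mathcal A=C_1$, $s=\mathrm{id}$ on the single point. I will restrict to the nondegenerate case $|\mathcal A|\geq 2$ and note that for $\mathcal A=C_2$ the map $s(a,b)=(b,a)$ still differs from $r^x$. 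So only the trivial group is genuinely degenerate, and I will flag it separately.

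The main obstacle is bookkeeping rather than ideas: I must fix a consistent semidirect-product convention so that the four-case homomorphism check works. The statement also implicitly relies on the lemma's classification being complete, and that was established earlier. The conceptual content is just the conjugation relation $s r^x s=r^{x^{-1}}$.
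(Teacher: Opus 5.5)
Your proposal is correct given the lemma and is essentially the paper's proof: you enumerate $\mathrm{Sym}_f$ via the lemma, verify $r^x r^y=r^{x\star y}$, $s^2=\mathrm{id}$ and $s r^x s=r^{x^{-1}}$, and read off $\mathcal A\rtimes C_2$. Your additions are genuine tightenings: the choice $\Theta(x,s)=r^x\circ s$ is the one that actually matches your multiplication rule, your distinctness check supplies the injectivity step the paper's appeal to ``exactly the relations'' leaves implicit, and you are right that the statement fails for trivial $\mathcal A$. As in the paper, however, everything rests on the lemma, which is true if $\mathrm{Sym}_f$ is restricted to maps $(a,b)\mapsto(\pi_1(a),\pi_2(b))$ or $(a,b)\mapsto(\pi_1(b),\pi_2(a))$ with $\pi_1,\pi_2$ permutations of $\mathcal A$, but false under the literal definition as all $f$-preserving bijections, since these form $\prod_{l\in\mathcal A}\mathrm{Sym}(f^{-1}(l))$ of order $(|\mathcal A|!)^{|\mathcal A|}>2|\mathcal A|$ once $|\mathcal A|\geq 3$.
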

\begin{proof}
The subgroup generated by the $r^x$’s is isomorphic to $\mathcal A$: indeed $r^x \circ r^y = r^{x\star y}$ and $(r^x)^{-1}=r^{x^{-1}}$ for any $x,y \in \mathcal{A}$. Thus $\langle r^x: x\in\mathcal A\rangle \cong \mathcal A$. We have $s^2=\mathrm{id}$, and conjugation by $s$ acts by inversion:
\[
s r^x s (a,b) = s(b\star x, a\star x^{-1}) = (a\star x^{-1}, b\star x) = r^{x^{-1}}(a,b).
\]
Thus $s r^x s = r^{x^{-1}}$. These are exactly the relations defining the semidirect product $\mathcal A \rtimes C_2$, where $C_2$ acts by inversion. Therefore the group formed by the elements $r^x$ and $sr^x$ is isomorphic to $\mathrm{Dih}(\mathcal A)$. Since we already proved these maps generate a subgroup of $\mathrm{Sym}_f$, and they realize the relations of $\mathrm{Dih}(\mathcal A)$, we conclude $\text{Sym}_f \;\cong\; \mathrm{Dih}(\mathcal A).$
\end{proof}

\subsection{Modular arithmetic}
When $\mathcal A$ is cyclic, the generalized dihedral group reduces to the classical dihedral group. This is the case for modular arithmetic operations like addition and multiplication modulo $p$.\\

\begin{proposition}
If $\mathcal A = C_n$, where $C_n$ is the finite cyclic group of order $n$, then
\[
\mathrm{Dih}(\mathcal A) \;\cong\; D_{2n},
\]
the dihedral group of order $2n$.
\end{proposition}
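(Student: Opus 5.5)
We will prove the claim by comparing presentations. The dihedral group has the standard presentation
\[
D_{2n} = \langle r, s \mid r^n = s^2 = e,\ s r s = r^{-1}\rangle .
\]
For the generalized dihedral group, $\mathrm{Dih}(\mathcal A) = \mathcal A \rtimes C_2$ with $C_2=\{1,s\}$ acting by inversion. Its elements are pairs $(x,\epsilon)$ with $x\in\mathcal A$ and $\epsilon\in C_2$, and the product is $(x,\epsilon)(y,\delta) = (x\,\epsilon(y),\ \epsilon\delta)$, where $s(y)=y^{-1}$.

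Take $\mathcal A = C_n = \langle g\rangle$. Set $\rho := (g,1)$ and $\sigma := (e,s)$ in $\mathrm{Dih}(C_n)$. We then verify the defining relations directly:
\begin{itemize}
\item $\rho^k = (g^k,1)$, so $\rho$ has order exactly $n$.
\item $\sigma^2 = (e\cdot s(e), s^2) = (e,1)$.
\item $\sigma\rho\sigma = (g^{-1},s)(e,s) = (g^{-1},1) = \rho^{-1}$.
\end{itemize}

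By von Dyck's theorem, the assignment $r\mapsto\rho$, $s\mapsto\sigma$ extends to a homomorphism $\Theta: D_{2n}\to \mathrm{Dih}(C_n)$.

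Next we show $\Theta$ is surjective. Every element $(g^k,1)$ equals $\rho^k$. Every element $(g^k,s)$ equals $(g^k,1)(e,s) = \rho^k\sigma$. So $\rho$ and $\sigma$ generate $\mathrm{Dih}(C_n)$.

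Finally we count orders. The group $\mathrm{Dih}(C_n)$ has $|C_n|\cdot|C_2| = 2n$ elements. The group $D_{2n}$ has at most $2n$ elements, since the relations let every word be rewritten in the form $r^k$ or $r^k s$ with $0\le k<n$. A surjection from a group of order at most $2n$ onto a group of order $2n$ is a bijection. Hence $\Theta$ is an isomorphism, and $D_{2n}$ has exactly order $2n$.

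Alternatively, if one prefers to avoid presentations, we can write down the explicit bijection
\[
r^k \mapsto (g^k,1), \qquad r^k s \mapsto (g^k,s),
\]
and check the multiplication rules case by case using $s r^k = r^{-k} s$. We would also note the degenerate cases: for $n=1$ and $n=2$ the group $D_{2n}$ is Abelian ($C_2$ and the Klein four-group), and the argument still goes through.

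The only delicate point is that the presentation really defines a group of order $2n$, which the counting step handles. Everything else is routine. For the paper's setting we then take $n=p$ for addition and $n=p-1$ for multiplication. Combined with the preceding Theorem, this gives $\mathrm{Sym}_f\cong D_{2p}$ with $r=r^1$ and $s(a,b)=(b,a)$.
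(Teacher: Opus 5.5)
Your proof is correct and follows the same route as the paper: both identify $\mathrm{Dih}(C_n)$ with $D_{2n}$ by matching the presentation $\langle r,s \mid r^n=s^2=e,\ srs=r^{-1}\rangle$. Your version is more complete, because the paper only notes that the generators satisfy these relations and then asserts this is the presentation. Your von Dyck homomorphism, surjectivity check, and order count $|D_{2n}|\le 2n=|\mathrm{Dih}(C_n)|$ supply the missing step showing that no further relations are needed.
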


\begin{proof}
Take $\mathcal A = \langle x \rangle \cong C_n$. Then $\mathrm{Dih}(\mathcal A) = \mathcal A \rtimes C_2$ has generators $r$ and $s$ with relations
\[
r^n=e,\quad s^2=e,\quad srs=r^{-1}.
\]
This is precisely the presentation of the classical dihedral group $D_{2n}$. Hence $\mathrm{Dih}(C_n) \cong D_{2n}$.
\end{proof}
For example, consider $\mathcal A = \mathbb Z_p$ with modular addition. The rotations are $r^k(a,b)=(a+k,\; b-k)$, taking the results modulo $p$, the main reflection is $s(a,b)=(b,a)$, and the relations above show the group is $D_{2p}$.
\[
D_{2p} = \langle r, s \mid r^p = e, \: s^2 = e, \: s r s = r^{-1} \rangle.
\] 
\subsection{Inverse-variant group action} \label{inverse_composition}
We want to work with modular subtraction and division, which the use \textit{right-inverse operation}  $\tilde{f}(a, b) = a \star b^{-1}$ instead of the standard (direct) group operation $f(a, b) = a \star b$. To describe the symmetries of those functions, we define an associated \textit{inverse-variant group action}. This action, denoted $\tilde{G}$, mirrors the original generalized dihedral group action $G$, but acts compatibly with the right-inverse operation. In the additive setting, this corresponds to subtraction; in the multiplicative setting, to division. Let \(G\) be the \textit{generalized dihedral group} of $\mathcal A$ acting on \(X=\mathcal{A} \times \mathcal{A}\), and define the inverse-variant group action \(\tilde{G}\) on \(X\) as:\\

\begin{enumerate}
    \item Generalized rotation: \(\tilde{r}^x(a,b) = (a \star x,\; b \star x)\)
    \item Reflection: \(\tilde{s}(a,b) = (b^{-1},\; a^{-1})\)
    \item Reflections: \(\tilde{s} \tilde{r}^x(a,b) = ((b \star x)^{-1},\; (a \star x)^{-1})\)\\
\end{enumerate}

To make the relationship between $G$ and $\tilde{G}$ explicit, we define a generator mapping between the two groups \(\Phi : \tilde{G} \to G\), connecting the inverse-variant action to the direct one:
\[
\Phi(\tilde{r}^x) = \Delta_x \circ r^x, \quad \text{where } \Delta_x(a,b) := (a,\; b \star x^2)
\]
\[
\Phi(\tilde{s}) = \iota \circ s, \quad \text{where } \iota(a,b) := (a^{-1},\; b^{-1})
\]
where \(x,a,b \in \mathcal{A}\). That is, the inverse-variant rotation is equivalent to applying the original rotation followed by a coordinate-wise scaling of the second component; and the inverse-variant reflection is inversion applied after permutation.

\section{Subgroup structure of $D_{2n}$}\label{sec:subgroups_dihedral}
To study the orbits under the action of the dihedral group $D_{2n}$, we need to study its subgroup structure. Using standard notation, the presentation of the dihedral group is:
$$
D_{2n}=\langle a, x \mid a^n=x^2=e, \: xax^{-1}=a^{-1}\rangle.
$$
The group has two generators, $a$, the rotation of order $n$, and $x$, the reflection of order 2. All subgroups fall into two distinct types:
\begin{enumerate}
\item Cyclic (rotation) subgroups $\langle a^d\rangle \cong \mathbb{Z}_{n/d}$. There is one such subgroup for each possible divisor $d\mid n$, which in total are $\tau(n)=\text{number of positive divisors of } n$. 
\item Dihedral subgroups $\langle a^d, a^rx\rangle \cong D_{2n/d}$, for $d\mid n$ and $0\leq r <d$. The total number of subgroups of this type is $\sigma(n)=\text{the sum of positive divisors of }n$.
\end{enumerate} 
Notice that the dihedral subgroups include the \textit{reflection subgroups} $\langle a^rx \rangle$ of order 2 when $d=n$, since $a^n=\text{id}$. When $n$ is prime, all the cyclic subgroups $\langle a^d\rangle$ are isomorphic to $\mathbb{Z}_n$, and the only dihedral subgroups are the \textit{reflection subgroups} mentioned before. The subgroup structure of the generalized dihedral group of an Abelian group $\mathcal{A}$, denoted $\text{Dih}(\mathcal{A})$, depends on the Abelian group. For the purposes of this work, it suffices with knowing that the reflections also form the same \textit{reflection subgroups} of order 2 as the standard dihedral case. Note that we denote the rotation as $r$ and the reflection as $s$ thorough the rest of the work.

\section{Symmetry Classes} 
\label{sec:symmetry_classes}
In this appendix we explore the structure of symmetry classes induced by the generalized dihedral group action. 
Throughout, let $(\mathcal A,\star)$ be an Abelian group and let $G=\mathrm{Dih}(\mathcal A)$ act on $X=\mathcal A \times \mathcal A$ as described in Section~\ref{sec:abelian_invariance_group}. 
The goal is to understand how this action partitions $X$ into equivalence classes, how these classes are internally structured by sub-actions, and how this specializes in modular arithmetic.

\paragraph{Symmetry classes as orbits.}
The orbit of a point \(x \in X\) under the group action \(G\) is defined as $\text{Orb}_G(x) = \{g  x \mid g \in G\}.$ Two points belong to the same symmetry class if they lie in the same orbit, i.e., if one can be transformed into the other by some group element. Since the group action preserves the operation $f(a,b)=a\star b$, given a sample $(a,b)\in X$, another sample $(a',b')$ belongs to the same $G$-orbit if and only if it has the same \emph{label} $l = f(a,b)=a\star b.$ Furthermore, all points with label $l$ belong to the same symmetry class, and thus we may  index the orbit by its label:
\[
\text{Orb}_G((a,b)) = O_l = \{(a',b')\in \mathcal A \times \mathcal A \mid a'\star b' = l\}.
\]
Thus $X$ is partitioned into disjoint classes $\{O_l : l\in\mathcal A\}$, each class collecting all pairs that yield the same product/sum under $\star$. 

\subsection{Sub-orbits and internal structure}
Within each symmetry class $O_l$, we may look at the finer partition induced by a subgroup of $G$. 
Given a subgroup $H\leq G$, it partitions $O_l$ into \emph{sub-orbits}:
\[
\text{Orb}_H((a,b)) = \{ h(a,b) \mid h\in H\}.
\]
This yields a decomposition
\[
O_l = \bigsqcup_{i} \text{Orb}_H(x_i),
\]
with one representative $x_i$ chosen from each sub-orbit. 
We refer to this as the \emph{internal orbit partition} of $O_l$ with respect to $h$:
\[
\mathcal{P}_h(O) := \left\{ \text{Orb}_H(x') \mid x' \in O \right\} / \sim
\]
where $\sim$ identifies equivalent orbits (orbits that contain the same elements):
\[
\text{Orb}_H(x') \sim \text{Orb}_H(x'') \iff \text{Orb}_H(x') = \text{Orb}_H(x'')
\]

\paragraph{Reflection subgroups.}
Every reflection $sr^x$ forms a subgroup $H=\{\text{id}, sr^x\}$, since they are elements of order 2 and their action is an involution: $(sr^x)^2 = \mathrm{id}.$ Thus each sub-orbit defined by a subgroup $H=\{\text{id}, sr^x\}$ has either size 1 (a fixed point) or size $2$ (a reflection pair). This provides a canonical “mirror symmetry” structure within each $O_l$.\\

\begin{proposition} Given all $n^2$ data samples for the problem of addition of an Abelian group $\mathcal{A}$ of order $n$:
\begin{enumerate}
\item Every data point is fixed under exactly one reflection $sr^x$.
\item Every data point has a pair under every reflection $sr^{m}$ except for the reflection $sr^x$ under which it is fixed. 
\end{enumerate}
\end{proposition}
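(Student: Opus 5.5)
The plan is to reduce both items to solving the fixed-point equation of a general reflection explicitly, exactly as was done for modular addition in Section~\ref{sec:breaking}. We use the description of the reflections from the Lemma in Appendix~\ref{sec:abelian_invariance_group}, $sr^x(a,b)=(b\star x^{-1},\,a\star x)$ for $x\in\mathcal A$. We also use the relation $s r^x s = r^{x^{-1}}$ from the Theorem there, which gives $(sr^x)^2 = (s r^x s)\, r^x = r^{x^{-1}} r^x = \mathrm{id}$, so every reflection is an involution.

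\textbf{Step 1 (item 1).} Fix a data point $(a,b)\in\mathcal A\times\mathcal A$ and ask for which $x$ we have $sr^x(a,b)=(a,b)$. Componentwise this reads $b\star x^{-1}=a$ and $a\star x=b$. These two equations are equivalent, since multiplying either one by $x$ or $x^{-1}$ yields the other. Both are equivalent to $x = a^{-1}\star b$. Since $\mathcal A$ is a group, this $x$ exists and is unique. Hence $(a,b)$ is fixed by the reflection $sr^{a^{-1}\star b}$ and by no other reflection. For $\mathbb Z_p$ this recovers the formula $b=a+k$ stated earlier.

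\textbf{Step 2 (item 2).} Now take any $m\neq a^{-1}\star b$. By Step 1, $sr^m(a,b)\neq(a,b)$. Since $sr^m$ is an involution, the orbit of $(a,b)$ under $H=\{\mathrm{id},sr^m\}$ is exactly $\{(a,b),\,sr^m(a,b)\}$, a set of two distinct points that $sr^m$ swaps. This is precisely a reflection pair in the sense of the sub-orbit discussion above. Because $sr^m\in\mathrm{Sym}_f$, the partner lies in the same label class $O_{a\star b}$.

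Combining the two steps gives the claim. Counting then confirms consistency: each reflection $sr^x$ has exactly $n$ fixed points $\{(a,a\star x)\}$, and the $n$ reflections together account for all $n^2$ points exactly once.

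There is no real obstacle in this argument. The only points requiring care are checking that the two coordinate equations in Step 1 are compatible rather than overdetermined, and justifying the involution property through the semidirect-product relation rather than assuming it.
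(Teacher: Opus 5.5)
Your proof is correct and follows essentially the same route as the paper: solve $sr^x(a,b)=(a,b)$ to get the unique index $x=a^{-1}\star b$, then use the involution property to show $(a,b)$ forms a two-element orbit $\{(a,b),\,sr^m(a,b)\}$ under every other reflection $sr^m$. Your explicit derivation of $(sr^x)^2=\mathrm{id}$ from $s r^x s = r^{x^{-1}}$ justifies a step the paper only asserts, and the counting check is a consistent extra.
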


\begin{proof}
Let $\mathcal A$ be an Abelian group. Recall $sr^x(a,b)=(b \star x^{-1},\;a \star x)$.

\emph{Fixed point.}
We solve the fixed point equation, $sr^x(a,b)=(a,b)$. The equalities $b\star x^{-1}=a,\: a \star x=b$ are equivalent to $x=a^{-1} \star b$, which is the unique solution.

\emph{Reflection pairs.}
If $x \neq a^{-1} \star b$, by uniqueness we know that $(a',b'):=sr^x(a,b)=(b \star x^{-1},a\star x)\neq (a,b)$. Because $sr^x$ is an involution, $sr^x(a',b')=(a,b)$, so $\{(a,b),(a',b')\}$ is a 2-element orbit (a reflection pair). Thus every non-fixed element lies in a 2-cycle under $sr^x$.
\end{proof}

\subsection{Fixed points} \label{sec:fixed_points}
Let \(G\) be a group acting on \(X\). The fixed points of a group element \(g \in G\), are points that are invariant under the action of \(g\). Formally, the fixed points of an element \(g\) are $\text{Fix}(g) = \{x\in X \mid g  x=x\}$. For reflections $sr^x$, we can find the expression for fixed points:
\[
sr^x(a,b)=(b \star x^{-1},\: a\star x) = (a,b) \quad \Rightarrow \quad b = a \star x.
\]
Which means that the fixed points of \(sr^x\) have the shape
\[
\text{Fix}(sr^x) = \{(a,\; a\star x) : a\in\mathcal A\}.
\]
In the case of inverse-variant actions, analogous formulas hold. 
Using its structural definition $\tilde{sr}^x=\iota \circ s \circ (\Delta_x \circ r^x)$ results in $\tilde{sr}^x(a,b)=\iota(s(\Delta_x(r^x(a,b)))) = ((b\star x)^{-1},\:(a\star x)^{-1})$, which gives us
\[
\text{Fix}(\tilde{sr}^x) = \{(a,\; (a\star x)^{-1}) : a\in\mathcal A\}.
\]

\begin{center}
\begin{tabular}{ccccc}
Operation & Addition & Multiplication & Subtraction & Division \\
\toprule
Fixed points
& $(a,\; a+k)$ 
& $(a,\; a\cdot k)$ 
& $(a,\; -a-k)$ 
& $(a,\; 1/ak)$ \\
\end{tabular}
\end{center}

\paragraph{Fixed points for modular addition. } We give a brief description of the fixed points in this setting. For addition modulo $p$, a point $(a,b)$ will be fixed under reflection $sr^k$ if
$$
sr^k(a,b)=(b-k,a+b)=(a,b) \implies k=b-a
$$
and will have label $l=a+b$. This gives us the solutions $a=\frac{l-k}{2},b=\frac{l+k}{2}$. When $p$ is odd, the solution is unique, which means for each label theres exactly one fixed point under every reflection. However, if $p$ is even, 2 no longer has a multiplicative inverse.

For $p$ even, the equations only have a solution if $l$ and $k$ have the same parity: either they are both odd, or they are both even. Indeed, $a=\frac{l-k}{2},b=\frac{l+k}{2}$ requires $l-k$ and $l+k$ to be divisible by 2. Additionally, there are now two possible fixed points for a valid combination of $l$ and $k$: 
$$
(a_1,b_1)=(\frac{l-k}{2} \mod p,\frac{l+k}{2} \mod p) \quad (a_2,b_2)=(\frac{l-k+p}{2} \mod p,\frac{l+k+p}{2} \mod p)
$$
For example, for $p=32$, we have two fixed points under $k=0$ with label $l=0$: $(0,0), (16,16) \mod 32$. We now want to find the reflection $sr^m$ that swaps between the two fixed points under $k$ (we omit the mod $p$ notation for clarity):
$$
\begin{aligned}
sr^m(\frac{l-k}{2},\frac{l+k}{2})=
&(\frac{l+k}{2}-m,\frac{l-k}{2}+m)=(\frac{l-k+p}{2},\frac{l+k+p}{2}) \implies \\
&\frac{l+k}{2}-m=\frac{l-k+p}{2}; \: \frac{l-k}{2}+m=\frac{l+k+p}{2}
\end{aligned}
$$
Which gives us the solution $m=k-\frac{p}{2}$. Moreover, the fixed points under reflection $sr^m$ are swapped by $sr^k$, since $m-\frac{p}2 = k - \frac{p}2-\frac{p}2=k$.

\subsection{Example: Prime modular addition}  \label{sec:modular_addition}
Let $\mathcal A=\mathbb Z_p$ with prime $p$, and consider $f(a,b)=a+b\pmod p$. 
Here $G\cong D_{2p}$, the classical dihedral group. For readability, we omit the explicit modular notation, with the understanding that all operations take place in \(\mathbb{Z}_p\). 

\paragraph{Symmetry classes.} For each $l\in \mathbb Z_p$, the symmetry class is $O_l = \{(a,b)\in\mathbb Z_p^2 : a+b=l\pmod p\}.$ Each $O_l$ has size $p$, and the $p$ classes partition $\mathbb Z_p^2$.

\paragraph{Cyclic sub-orbits.} Since $p$ is prime, each rotation $r^k$ generates the complete cyclic subgroup, $\langle r^k\rangle \cong \mathbb{Z}_p$ for any $k\in\mathbb{Z}_p$. Furthermore, given a point $(a,b)$, the orbit of the cyclic subgroup $\langle r \rangle$ is the class $O_l$. Thus, the sub-orbit of the cyclic subgroup $\langle r \rangle $ equals the full class $O_l$.

\paragraph{Reflection sub-orbits.} Since $p$ is prime, the only dihedral subgroups are the reflection subgroups $\langle sr^k\rangle$. Each reflection subgroup $\langle sr^k\rangle$ partitions $O_l$ into:
\begin{itemize}
\item One fixed point $(a,b) = \big(\tfrac{l-k}{2}, \tfrac{l+k}{2}\big)$, using invertibility of $2$ mod $p$.
\item $(p-1)/2$ reflection pairs of the form $(a,b)$ and $(b-k,a+k)$ swapped by $sr^k$.
\end{itemize}
Each reflection therefore defines a unique partition of $O_l$, and each point in $O_l$ is the fixed point of exactly one reflection. This follows from the fact that each $O_l$ has $p$ elements; removing the unique fixed point leaves $p-1$, which is even, hence splits into exactly $(p-1)/2$ disjoint 2-cycles.

\section{Fixed point experiments}\label{sec:fixed_point_experiments}

We seek to generalize our results to the addition operation of Abelian groups. Let \(\mathcal{A}\) be an Abelian group, meaning \(a\star b = b \star a\) for all \(a,b \in \mathcal{A}\). We now state the Fundamental Theorem of Finite Abelian Groups. Let \( \mathcal{A} \) be a finite Abelian group. Then there exist positive integers \( n_1, n_2, \dots, n_r \) such that
\[
\mathcal{A} \cong C_{n_1} \times C_{n_2} \times \cdots \times C_{n_r},
\]
where each \( C_{n_i} \) is a cyclic group of order \( n_i \), and one of the following conditions holds:
\begin{itemize}
    \item[(1)] (Primary decomposition form) Each \( n_i \) is a power of a prime, and this decomposition is unique up to isomorphism and ordering.
    \item[(2)] (Invariant factor form) The integers satisfy \( n_1 \mid n_2 \mid \cdots \mid n_r \), and the decomposition is unique up to isomorphism.
\end{itemize}

We consider the problem of learning a binary function \( f: \mathcal{A} \times \mathcal{A} \to \mathcal{A} \), where \( \mathcal{A} \) is a finite Abelian group and \(f(a, b) = a \star b,\) where \( \star \) denotes the group operation in \( \mathcal{A} \). Since every finite Abelian group is isomorphic to a direct product of cyclic groups, we write \(\mathcal{A} \cong \mathbb{Z}_{n_1} \times \mathbb{Z}_{n_2} \times \cdots \times \mathbb{Z}_{n_k}.\) Accordingly, each group element \( a \in \mathcal{A} \) can be written as a tuple \( a = (a_1, a_2, \ldots, a_k) \), where \( a_i \in \mathbb{Z}_{n_i} \). The group operation is performed componentwise: $a \star b = (a_1 + b_1 \bmod n_1,\ a_2 + b_2 \bmod n_2,\ \ldots,\ a_k + b_k \bmod n_k).$

\subsection{Moving fixed points}

We proceed to experimentally verify the claims from Section \ref{sec:breaking}. For arithmetic modulo $p$, we begin by designing the same partition, in which the test set ($X_{te}$) contains all $p$ samples (or $p-1$ if we're dealing with multiplication or division) encoding the fixed points under a reflection $sr^k$, and the train set ($X_{tr}$) contains the remaining data samples. RFM with Gaussian and quadratic kernels don't generalize on this partition. Moving fixed points from the test set to the train set doesn't result in a generalizing partition, and we verify that for the Gaussian kernel in Figure \ref{fig:moving_fixed_points} (A) for addition with reflection $s$ and multiplication with refletion $sr^{13}$. The $x$-axis represents the number of fixed points moved from the test set to the train set. There is no sequencial dependence in the removal process, meaning that in each step  we randomly select $x$ fixed points from the test set. This adds robustness to the experiment by ensuring that each set of points is chosen independently, compared to a standard method where the $x$-th point would be removed based on the previous step (e.g., removing the $x-1$ points from the previous step and then an additional random one).

This ties with our observation that, when holding out two points, RFM generalizes \textit{only} if the pair of points is not invariant under a common reflection. In Figure \ref{fig:moving_fixed_points} (B) we compare the features (AGOP) learned by RFM with a Gaussian kernel on the full dataset except for two random points four different times. In the leftmost AGOP, both removed points are fixed under the same reflection $s$, while in the rest the points are fixed by different reflections, which can be observed in the diagonal stripes. 

\begin{minipage}{1\textwidth}
    \centering
    \includegraphics[width=1\linewidth]{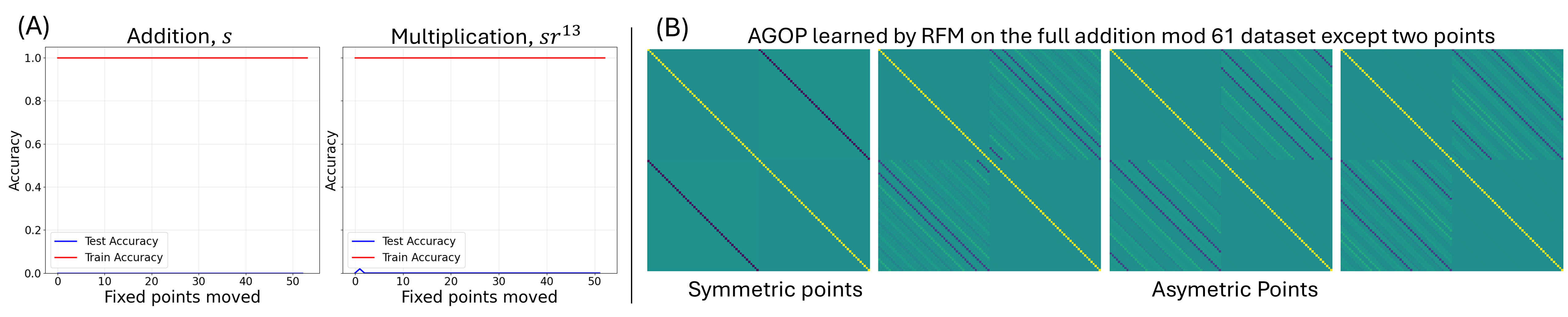}\
    \captionof{figure}{(A) For addition with reflection $s$ (left) and multiplication with reflection $sr^{13}$ (right) mod 53, we train RFM with a Gaussian kernel for 60 iterations without the fixed points under that reflection, and move fixed points from test to train. This doesn't recover generalization. (B) AGOPs learned by RFM with a Gaussian kernel on addition modulo 61 on the full dataset except two random samples. In the first image, both withheld points are fixed under the same reflection $s$ (non-generalizing partition), while in the rest the withheld points are fixed under different reflections (generalizing partition).}
    \label{fig:moving_fixed_points}
\end{minipage}

\subsection{Moving random points and symmetric pairs}

We repeat the experiments from Section \ref{sec:breaking}. First, we construct the same partition: the test set ($X_{te}$ contains all the fixed points under a reflection $sr^k$, and the train set ($X_{tr}$) contains the remaining data samples, for an arbitrary reflection and operation or Abelian group. In Figure \ref{fig:fixed_points_mult_div}, we experimentally verify that, starting from that partition, moving points at random from the train set to the test set results in a generalizing partition both in quadratic and Gaussian kernels for multiplication mod 61 with reflection $sr^{35}$ (left) and division mod 53 with reflection $sr^{27}$ (right). The $x$-axis represents the number of random points moved from the train set to the test set. Similar to the previous subsection, there is no continuity in the removal process, meaning that in each step we randomly select $x$ points from the train set set, which makes the selected points independent from the previous step.

\begin{minipage}{1\textwidth}
    \centering
    \includegraphics[width=1\linewidth]{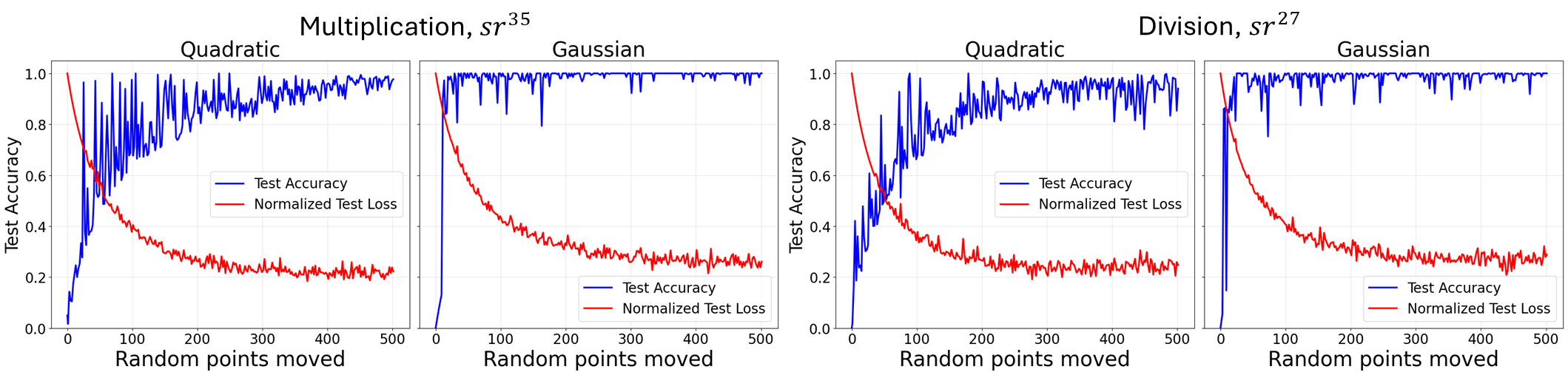}\
    \captionof{figure}{\textbf{For multiplication mod 61} (left), we train RFM with Gaussian and quadratic kernels for 60 iterations without the fixed points under reflection $sr^{35}$, and move random points from train to test, which enables generalization. We do the same \textbf{for division mod 53} (right) under reflection $sr^{27}$. The loss is normalized to map the highest value to 1, and added to show its evolution.}
    \label{fig:fixed_points_mult_div}
\end{minipage}

In Figure \ref{fig:fixed_random_points_abelian}, we repeat the experiment for the Abelian group $\mathcal{A} \cong C_5 \times C_{11}$ with the reflection $sr^{(3,2)}$ (remember reflections are indexed by elements of the Abelian group) for Gaussian (left) and quadratic (right) kernels, and also include the experiment in which, rather than moving points at random, we move symmetric pairs under the chosen reflection, which doesn't result in a generalizing partition. Like in the previous experiments, there is no continuity in the removal process.

\begin{minipage}{1\textwidth}
    \centering
    \includegraphics[width=0.49\linewidth]{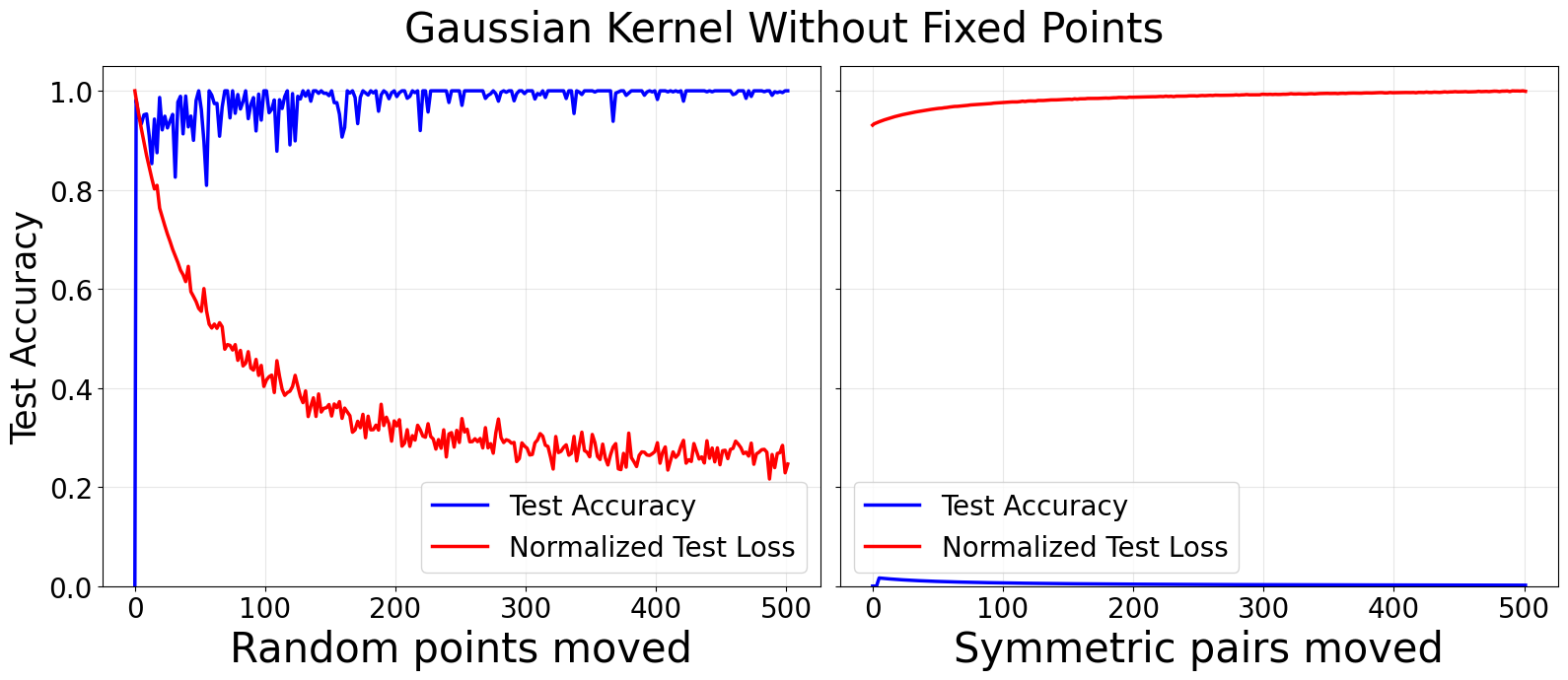}\
    \includegraphics[width=0.49\linewidth]{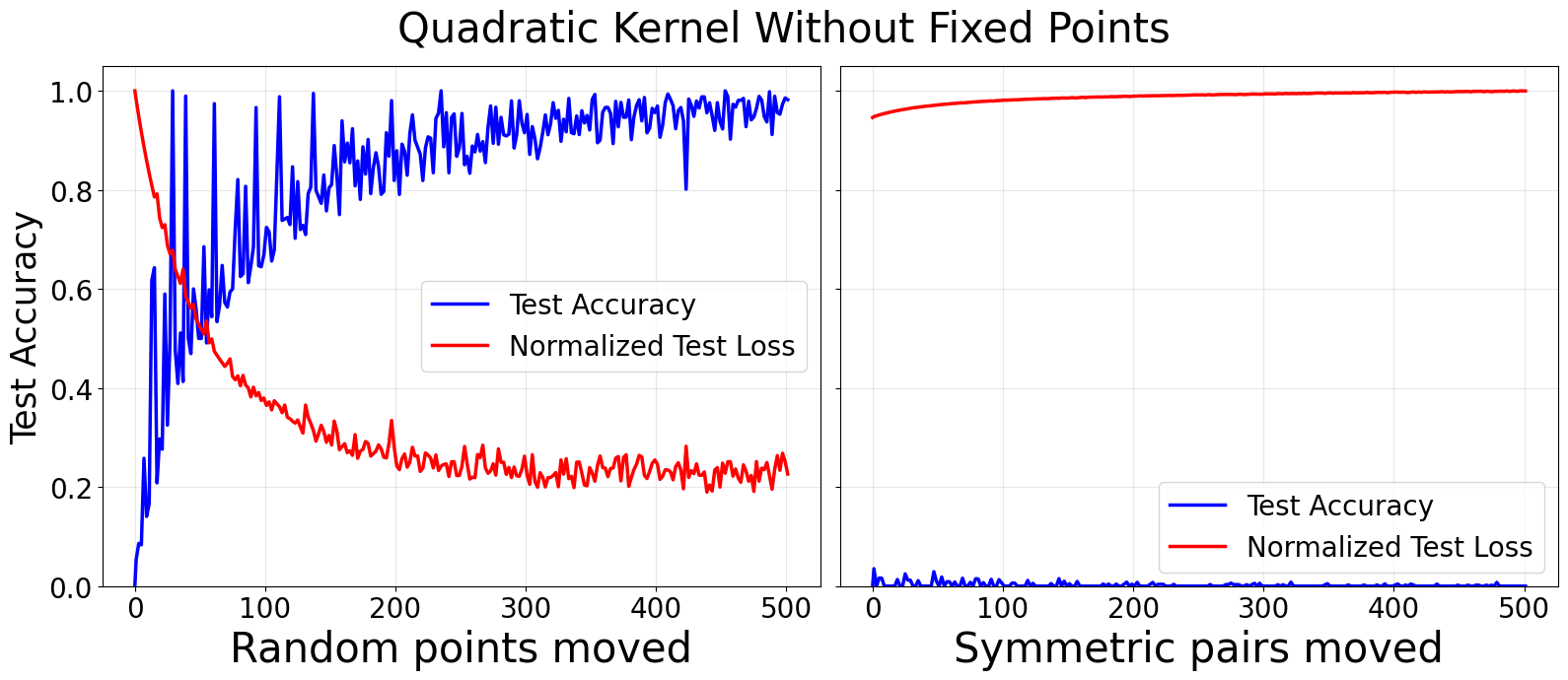}
    \captionof{figure}{\textbf{For the Abelian group $\mathcal{A} \cong C_5 \times C_{11}$}, we train RFM with Gaussian (left) and quadratic kernels (right) for 60 iterations without the fixed points under reflection $sr^{(3,2)}$, and move random points from train to test, which enables generalization. We also move points from train to test by symmetric pairs under the reflection $sr^{(3,2)}$, which doesn't help with generalization. The loss is normalized to map the highest value to 1, and added to show its evolution.}
    \label{fig:fixed_random_points_abelian}
\end{minipage}

\subsection{Other subgroups of $D_{2p}$}\label{sec:other_subgroups}
In the previous experiments we restricted ourselves to the \textit{reflection} subgroups of $D_{2p}$ presented by $H=\langle sr^k\rangle$. We seek to repeat some of the experiments for other subgroups, mainly the dihedral subgroups described in Section \ref{sec:subgroups_dihedral}. For a divisor $d\mid p$, we can define the subgroup $\langle r^d, sr^m\rangle$ with a compatible $m$, which has elements $H=\{\text{id},r^d, r^{dk}, \dots, sr^m,sr^{m+d}, sr^{m+dk}, \dots\}$. For example, in addition modulo 32, some of the subgroups are:
\begin{enumerate}
\item $H=\langle r^{16},s\rangle=\{\text{id},r^{16},s,sr^{16}\} \cong D_4$.
\item $H=\langle r^{8},s\rangle=\{\text{id},r^{8},r^{16},r^{24},s,sr^{8},sr^{16},sr^{24}\} \cong D_8$.
\item $H=\langle r^{4},s\rangle=\{\text{id},r^{4},r^{8},r^{12},r^{16},r^{20},r^{24},r^{28},s,sr^{4},sr^{8},sr^{12},sr^{16},sr^{20},sr^{24},sr^{28}\} \cong D_{16}$.
\end{enumerate}

If we select the fixed points under every reflection in those subgroups, build the test set with them, and build the train set with remaining samples, we get an $H$-invariant partition analogous to the \textit{fixed points} partitions we used in our previous experiments. This can be checked from our description of the fixed points in the non-prime setting in Section \ref{sec:fixed_points}, which shows the fact that the fixed points under $s$ with same label form a pair under $sr^{16}$ and viceversa. Indeed, the fixed points under $sr^k$ form a pair under $sr^{k-p/2}$, which explains the structure in our partition. We verify this experimentally in Figure \ref{fig:dihedral_subgroups2} for the subgroups $H=\langle s\rangle$, $H=\langle r^{16},s\rangle$, $H=\langle r^{8},s\rangle$, $H=\langle r^{4},s\rangle$.

\begin{minipage}{1\textwidth}
    \centering
    \includegraphics[width=0.9\linewidth]{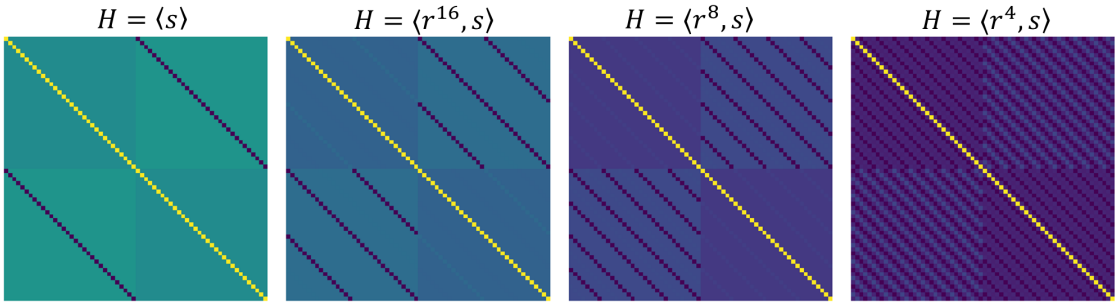}\
    \captionof{figure}{AGOPs learned by RFM with a Gaussian kernel trained on addition mod 32 on all data samples except for the fixed points of all the reflections $sr^k$ of the dihedral subgroups, in order from left to right: $H=\langle s\rangle$, $H=\langle r^{16},s\rangle$, $H=\langle r^{8},s\rangle$, $H=\langle r^{4},s\rangle$. RFM doesn't generalize to the withheld points in any of these settings.}
    \label{fig:dihedral_subgroups2}
\end{minipage}
\clearpage 

\section{Permutation representations} \label{sec:permutation_reps}

One can obtain the permutation representations of the symmetry group of each operation from the closed form of its elements and the one-hot encoded representation of the data. In our setup, following the data encodings used by \cite{gromov2023grokking, mallinar2025emergence}, group elements are represented by one-hot vectors. Let $G$ be a group of size $n$, and let each element $g \in G$ be encoded as a vector $e_g \in \mathbb{R}^n$, where $e_g$ is the one-hot encoding of $g$. Input data points consist of ordered pairs $(a, b) \in G \times G$, which are encoded as the concatenation $x = e_a \,\|\, e_b \in \mathbb{R}^{2n}.$ Group actions on $(a, b)$ correspond to linear transformations on $x \in \mathbb{R}^{2n}$, defined via block permutation matrices. For the symmetry group of the group operation (corresponding to addition and multiplication):
$$
r^x(a, b) = (a \star x,\; b \star x^{-1}), \quad s(a, b) = (b, a),
$$
we define permutation matrices $R_x, R_x^{-1} \in \mathbb{R}^{n \times n}$ corresponding to right multiplication by $x$ and $x^{-1}$, respectively. The reflection operator corresponds to swapping the two group elements. Then, the action of $r^x$ and $s$ on the one-hot input vector $x = e_a \,\|\, e_b$ is implemented by:
$$
\Pi(r^x) =
\begin{bmatrix}
R_x & 0 \\
0 & R_x^{-1}
\end{bmatrix}, \quad
\Pi(s) =
\begin{bmatrix}
0 & I_n \\
I_n & 0
\end{bmatrix}
$$
where $\Pi: G \to \mathrm{GL}(2n, \mathbb{R})$ denotes the permutation representation of the group action on data. We now consider the right-inverse operation $\tilde{f}(a,b)=a \star b^{-1}$ (corresponding to subtraction and division) with symmetry group denoted $\tilde{G}$ and defined by:
$$
\tilde{r}^x(a, b) = (a \star x,\; b \star x), \quad
\tilde{s}(a, b) = (b^{-1}, a^{-1}).
$$
This induces a modified linear representation $\tilde{\Pi}: \tilde{G} \to \mathrm{GL}(2n, \mathbb{R})$. Let $R_x \in \mathbb{R}^{n \times n}$ be as before, and define $\mathcal{I} \in \mathbb{R}^{n \times n}$ as the \textit{inversion matrix}, i.e., $I_{ij} = 1$ if $i = j^{-1}$, and zero otherwise. Then the representation matrices become:
$$
\tilde{\Pi}(\tilde{r}^x) =
\begin{bmatrix}
R_x & 0 \\
0 & R_x
\end{bmatrix}, \quad
\tilde{\Pi}(\tilde{s}) =
\begin{bmatrix}
0 & \mathcal{I}  \\
\mathcal{I}  & 0
\end{bmatrix}
$$
We proceed to compare the feature matrices learned by RFM when trained on all samples except for the fixed points under a reflection $sr^k$ with the permutation representation of such reflections for addition, subtraction, multiplication and division modulo 29 and all possible reflections.

\begin{minipage}{1\textwidth}
    \centering
    \includegraphics[width=1\linewidth]{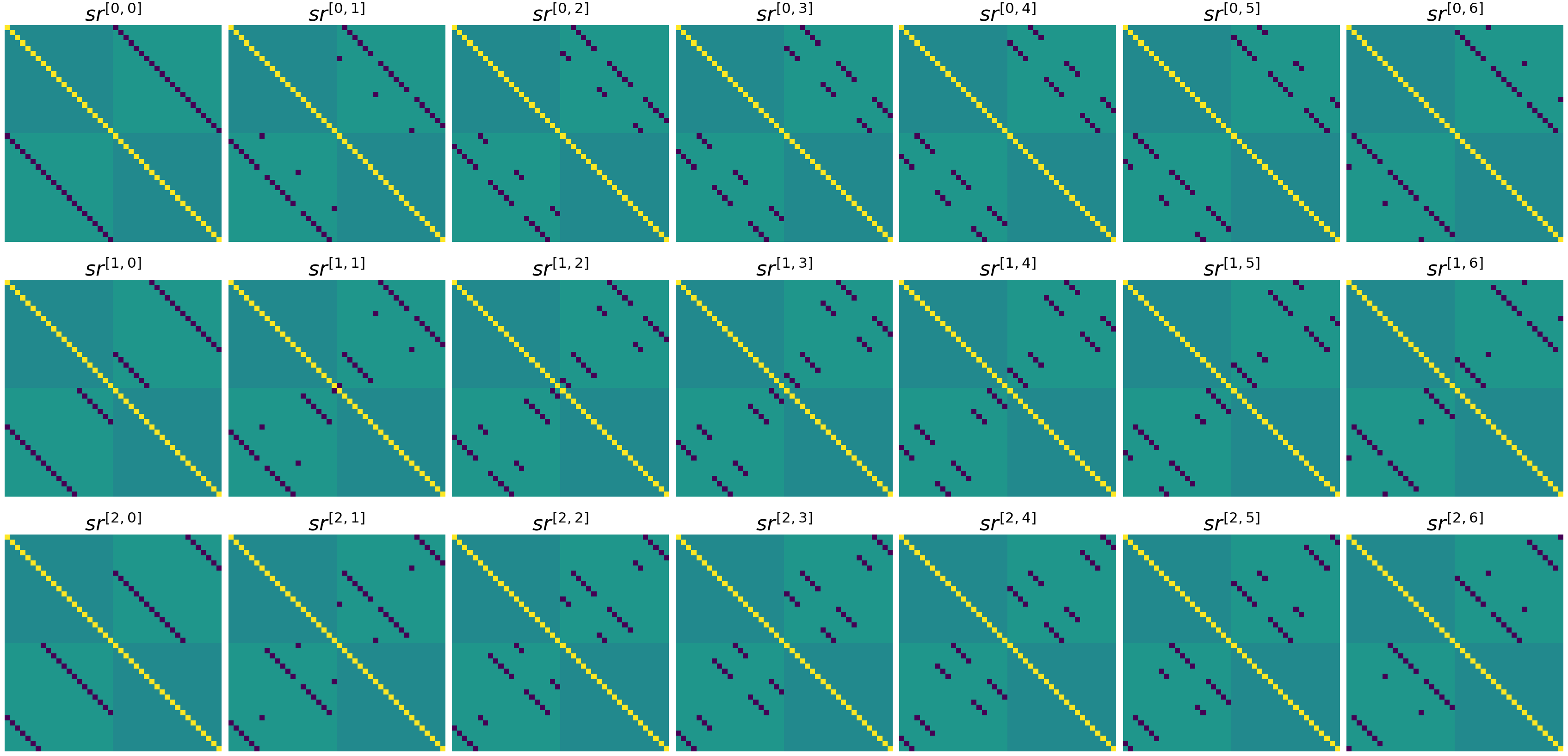}\
    \captionof{figure}{Feature matrices learned by RFM with a Gaussian kernel trained to solve addition for the Abelian group $\mathcal{A}\cong \mathbb{Z}_3\times\mathbb{Z}_7$ for 60 iterations on all data samples except the fixed points for a reflection $sr^x$ with $x\in\mathcal{A}$.}
    \label{fig:reflections_abelian}
\end{minipage}

\begin{minipage}{1\textwidth}
    \centering
    \includegraphics[width=0.49\linewidth]{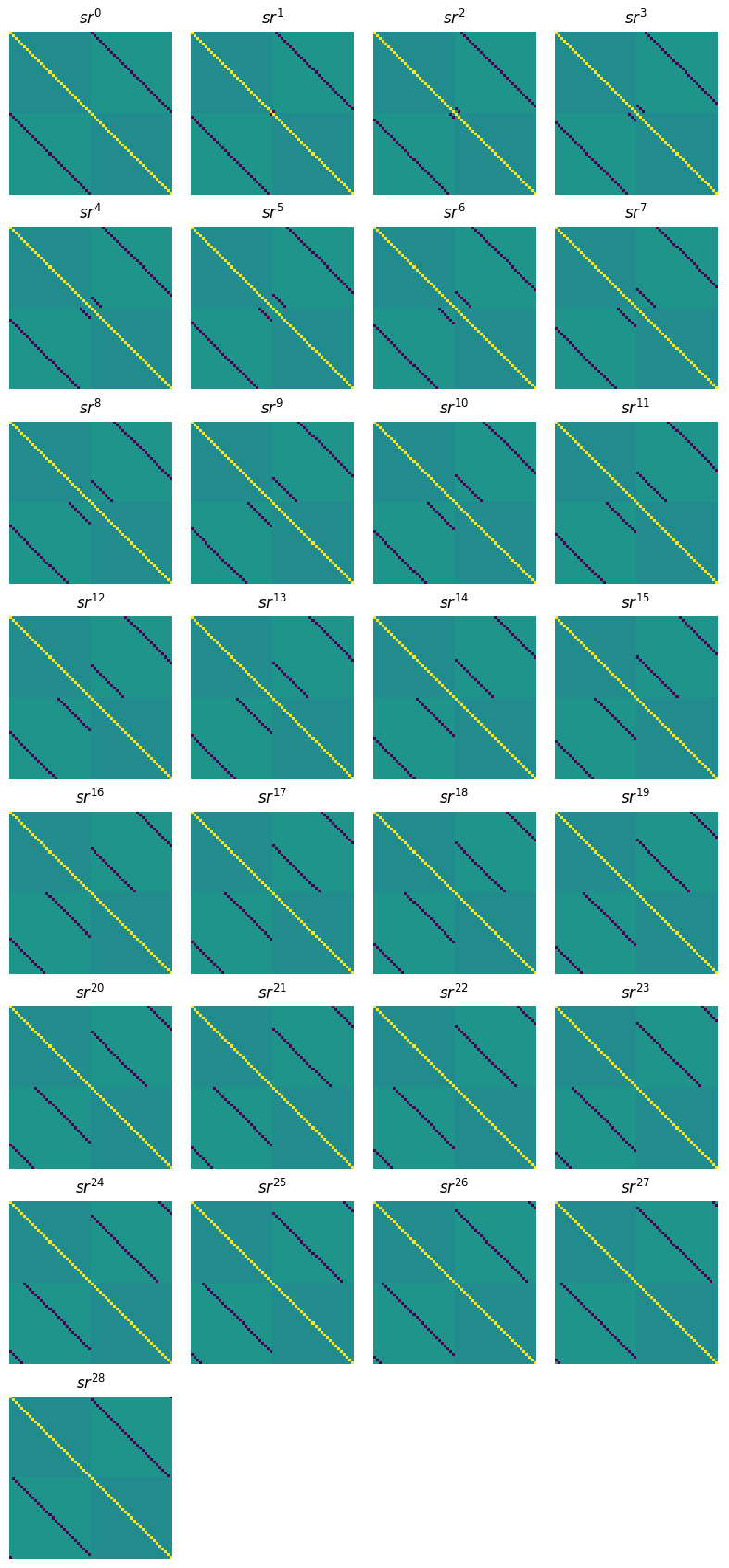}\
    \includegraphics[width=0.49\linewidth]{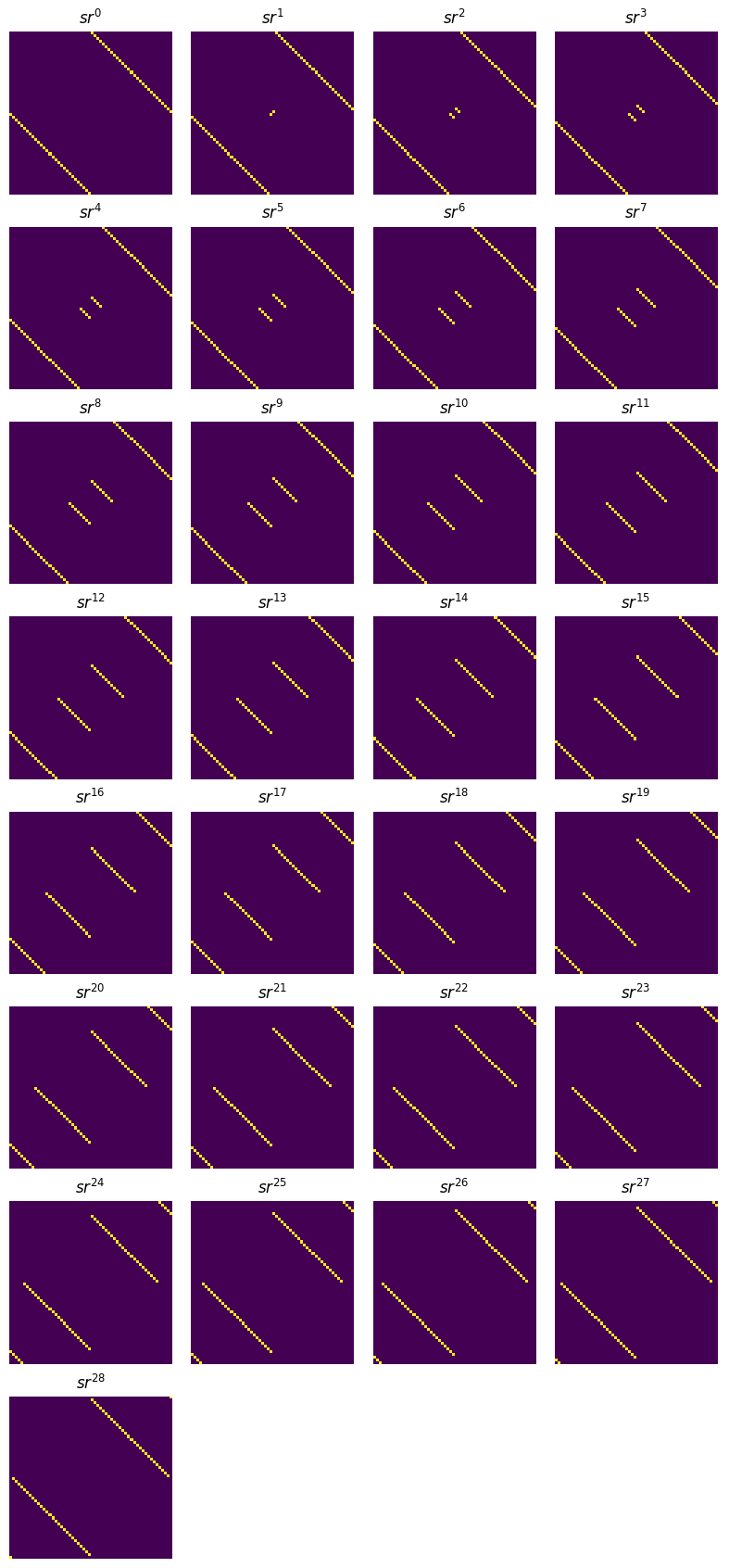}
    \captionof{figure}{On the left, feature matrices learned by RFM with a Gaussian kernel trained to solve addition modulo $29$ for 60 iterations on all data samples except the fixed points for a reflection $sr^k$. On the right, matrix representations of the reflections $sr^k$ according to representation theory.}
    \label{fig:reflections_add_29}
\end{minipage}

\begin{minipage}{1\textwidth}
    \centering
    \includegraphics[width=0.49\linewidth]{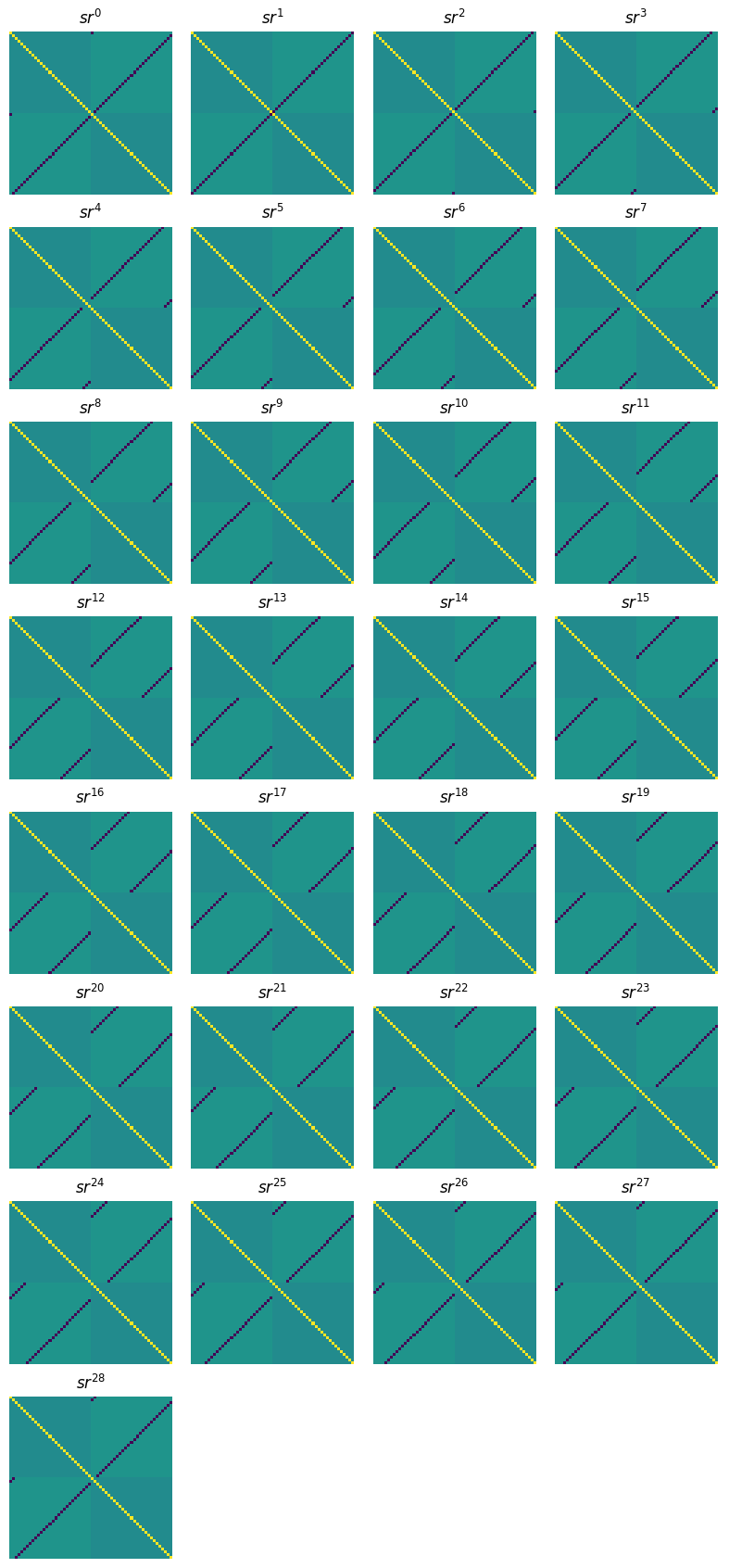}\
    \includegraphics[width=0.49\linewidth]{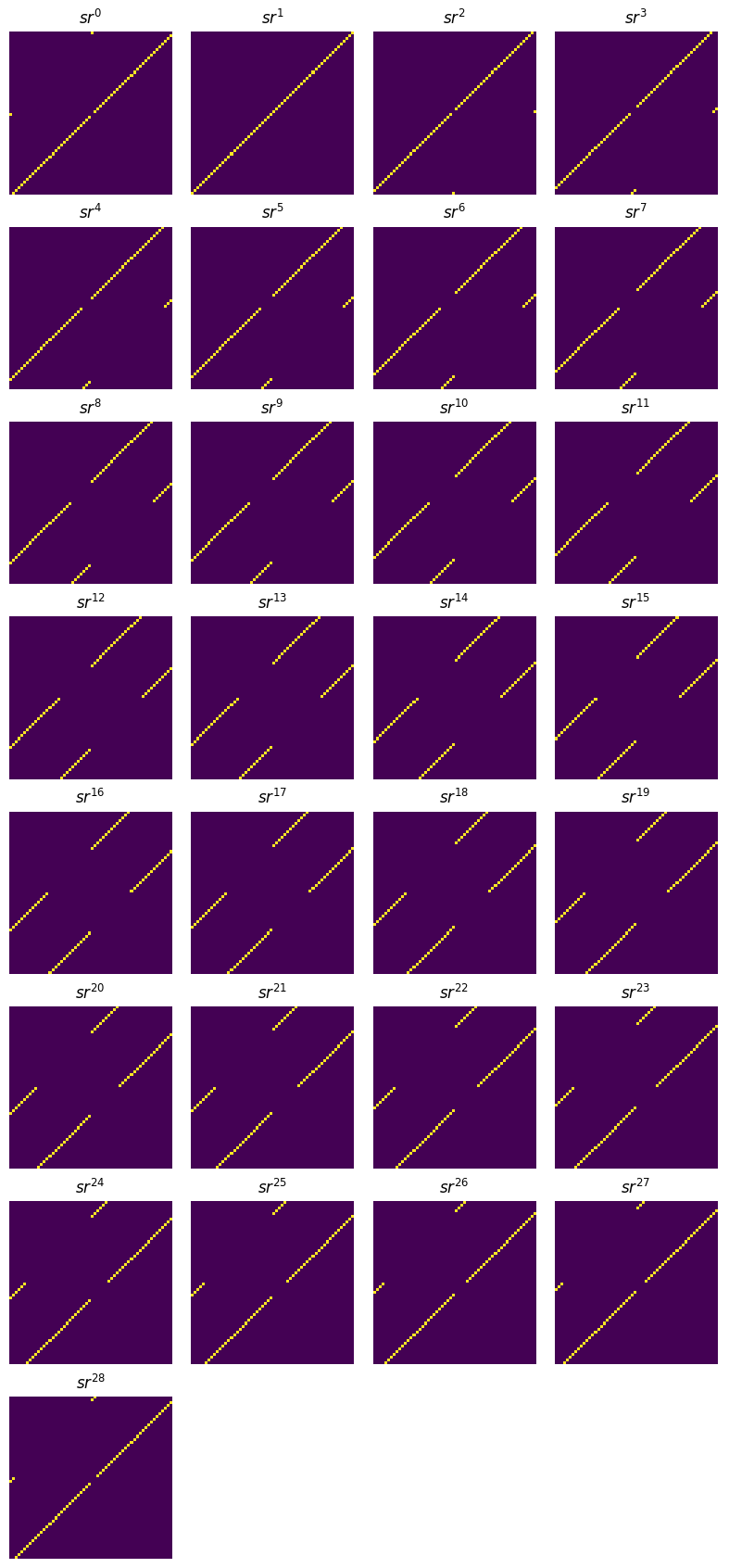}
    \captionof{figure}{On the left, feature matrices learned by RFM with a Gaussian kernel trained to solve subtraction modulo $29$ for 60 iterations on all data samples except the fixed points for a reflection $sr^k$. On the right, matrix representations of the reflections $sr^k$ according to representation theory.}
    \label{fig:reflections_sub_29}
\end{minipage}

\begin{minipage}{1\textwidth}
    \centering
    \includegraphics[width=0.49\linewidth]{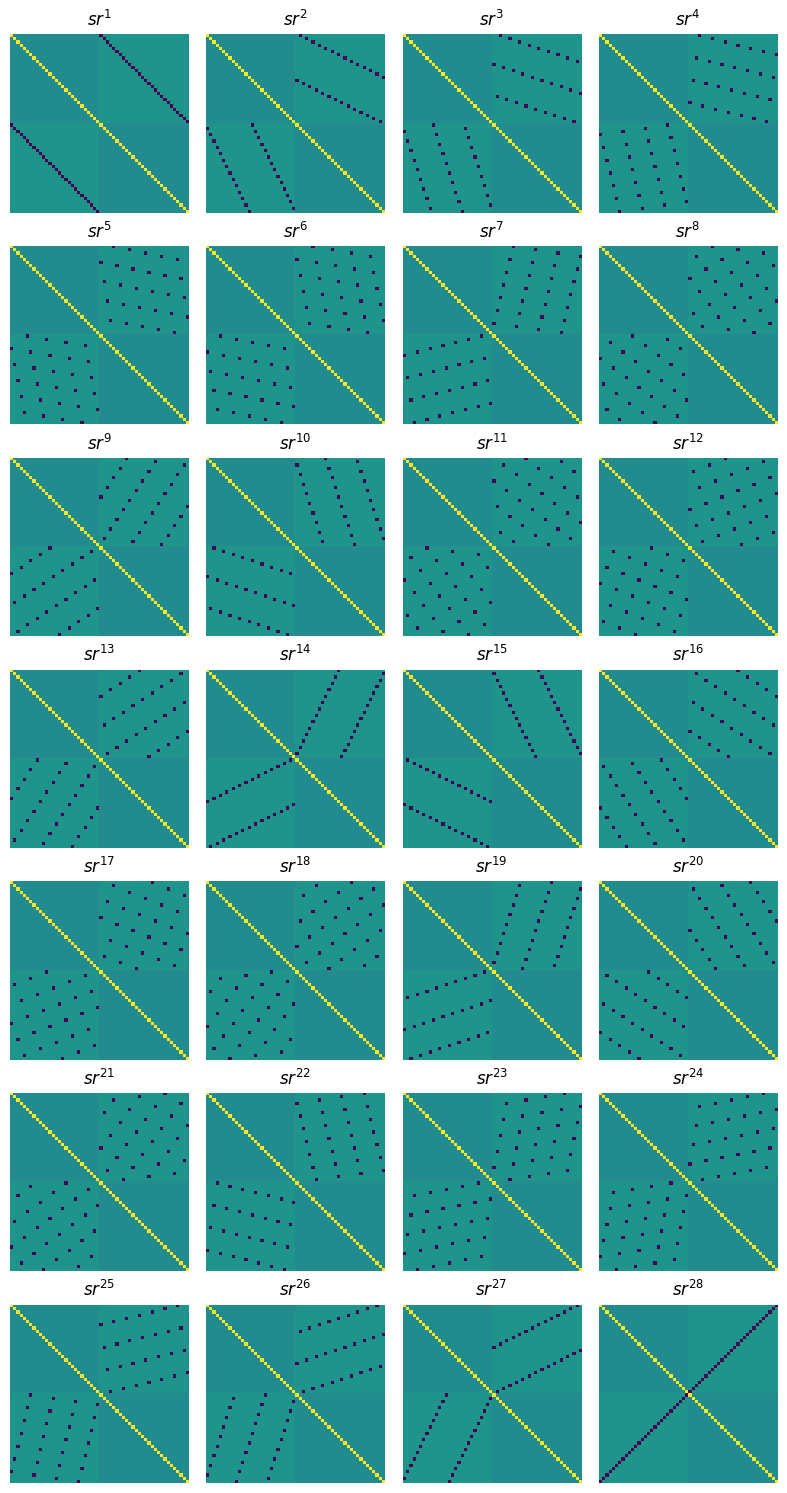}\
    \includegraphics[width=0.49\linewidth]{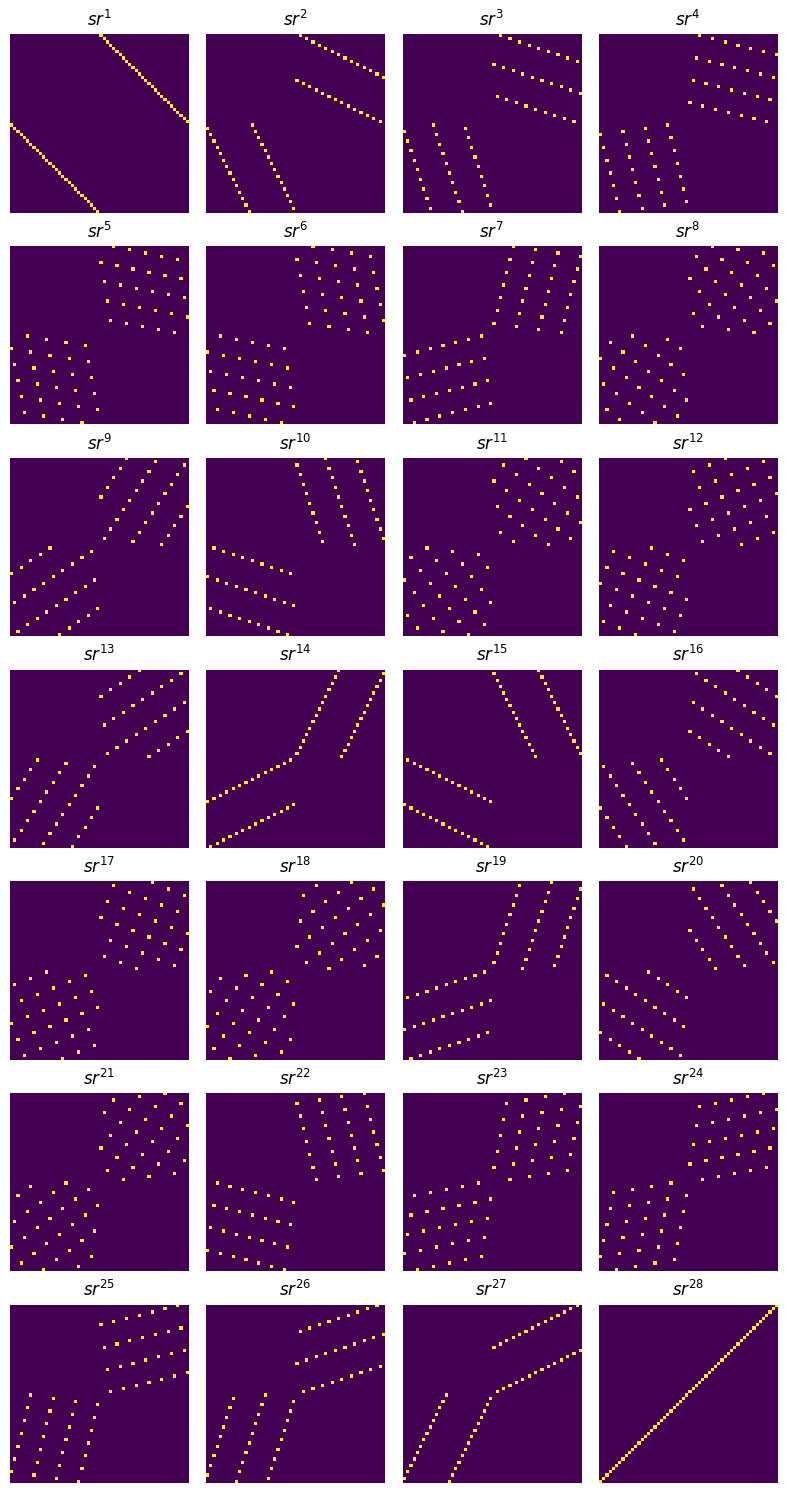}
    \captionof{figure}{On the left, feature matrices learned by RFM with a Gaussian kernel trained to solve multiplication modulo $29$ for 60 iterations on all data samples except the fixed points for a reflection $sr^k$. On the right, matrix representations of the reflections $sr^k$ according to representation theory.}
    \label{fig:reflections_multi_29}
\end{minipage}

\begin{minipage}{1\textwidth}
    \centering
    \includegraphics[width=0.49\linewidth]{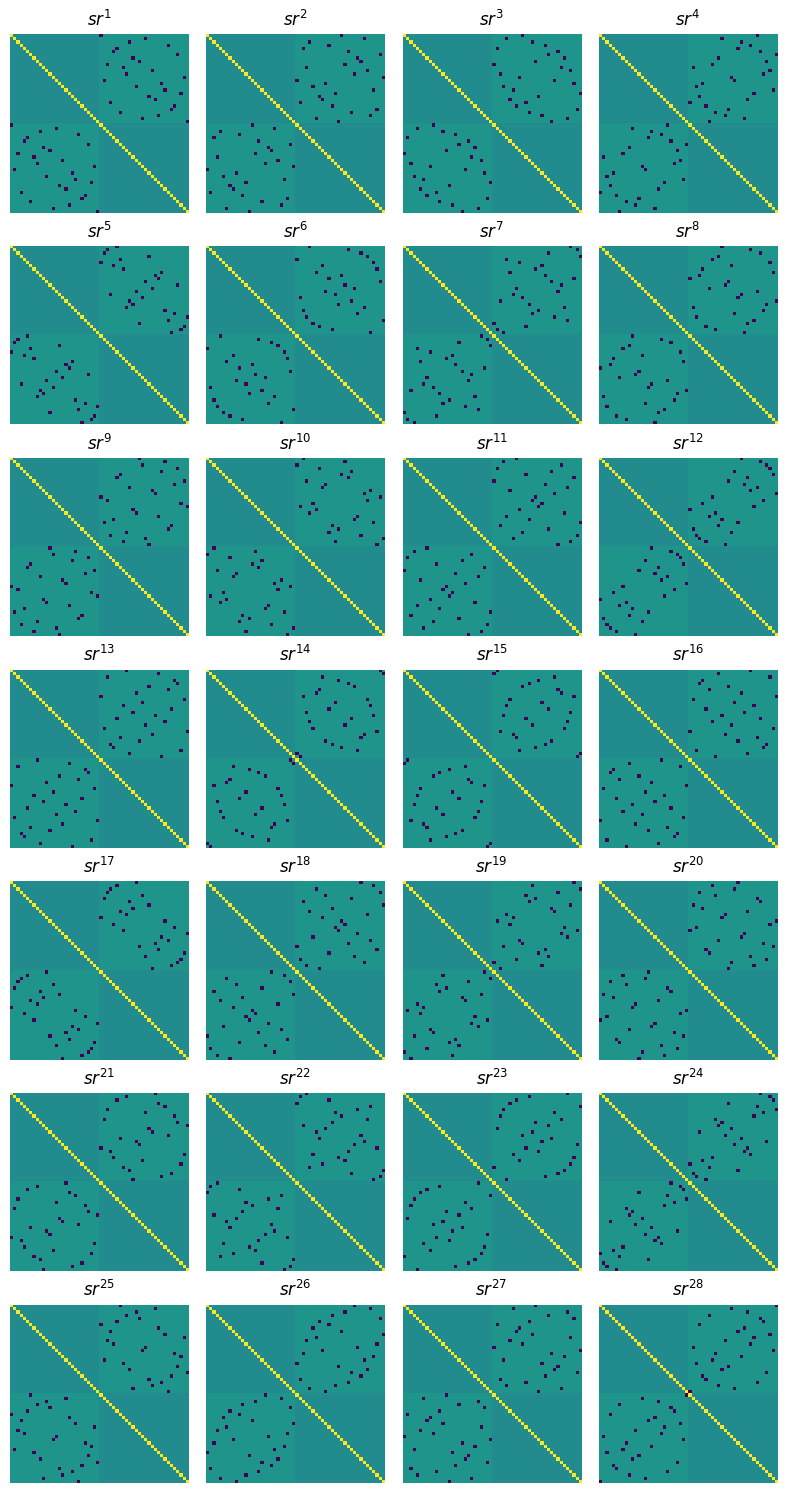}\
    \includegraphics[width=0.49\linewidth]{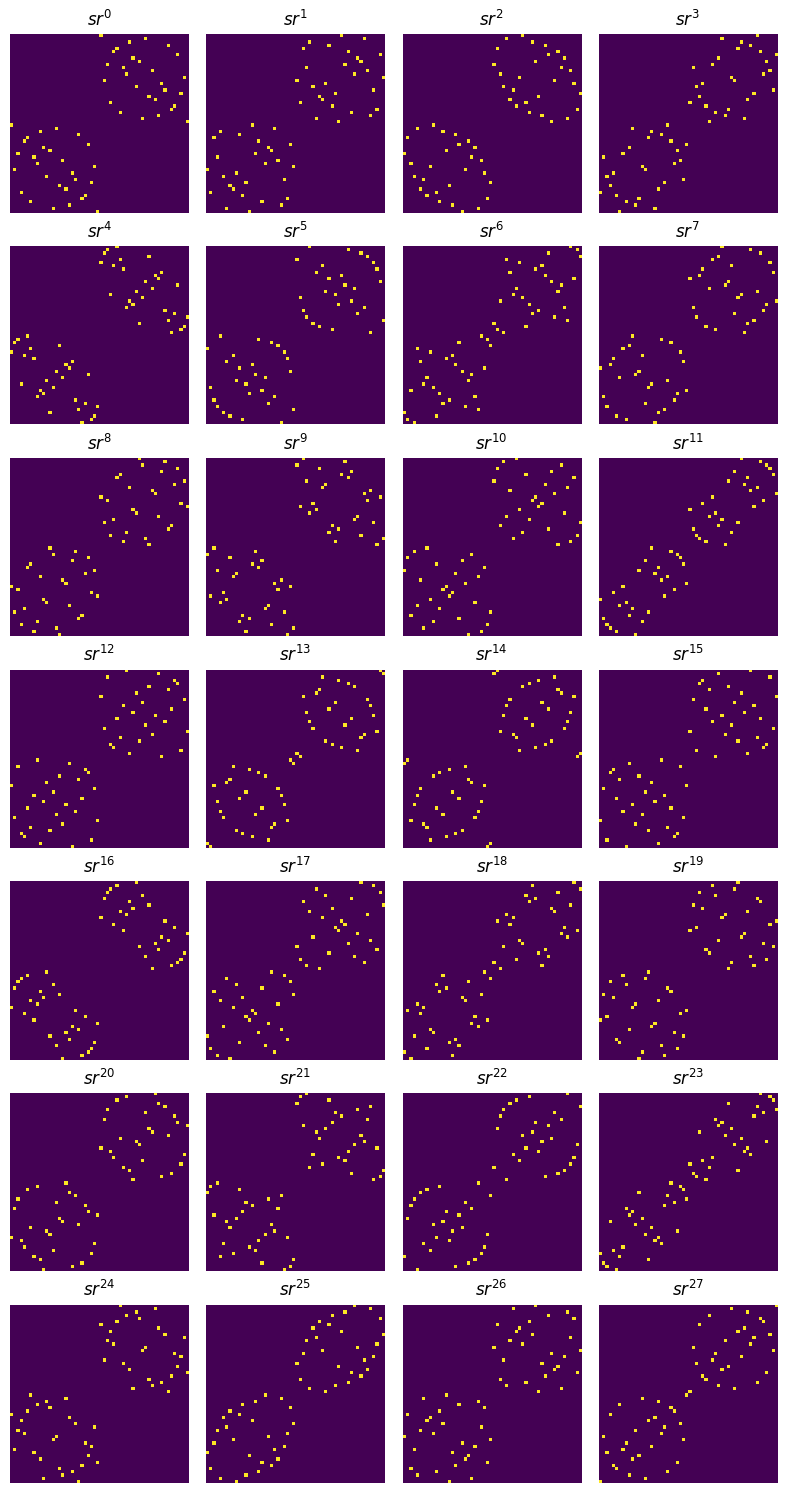}
    \captionof{figure}{On the left, feature matrices learned by RFM with a Gaussian kernel trained to solve division modulo $29$ for 60 iterations on all data samples except the fixed points for a reflection $sr^k$. On the right, matrix representations of the reflections $sr^k$ according to representation theory.}
    \label{fig:reflections_div_29}
\end{minipage}

\section{Generalization experiments}\label{sec:generalization_experiments}

In this appendix we generalize the results from Section \ref{features_generalization}. In Figure \ref{fig:rfm_predictions} we plot the correct and incorrect predictions of RFM for addition modulo 29 in the partitions described in Section \ref{sec:breaking}. In (A), we train RFM with a Gaussian kernel with $M_0=I_{2p}$ on all data samples except for the fixed points under $s$, the learned feature matrix aligns with the structure of the subgroup $H=\{\text{id},s\}$. The fixed points lay in the \textit{reflection axis} of $s$ (marked by the black line), and since they aren't in the train set, are incorrectly classified. In (B), we instead remove the fixed points under $sr^{10}$, but choose $M_0$ to encode the subgroup $H=\{\text{id},s\}$ (we set $M_0$ to be the learned feature matrix from the setting in A). Since the model \textit{learns} $s$ (Finding \ref{claim:generalization}), the reflection axis is still the same, so the test samples can be correctly classified. In (C), we repeat the setting from B, but removing the pairs under $s$ of some of the test samples. The test samples whose pairs have been removed from the train set are incorrectly classified, consistent with Finding \ref{claim:generalization}, since they are no longer in $\text{Orbit}_H(X_{tr})$. In (D), we remove all the pairs, which makes all the points in the test set be incorrectly classified.

\begin{minipage}{1\textwidth}
    \centering
    \includegraphics[width=0.8\linewidth]{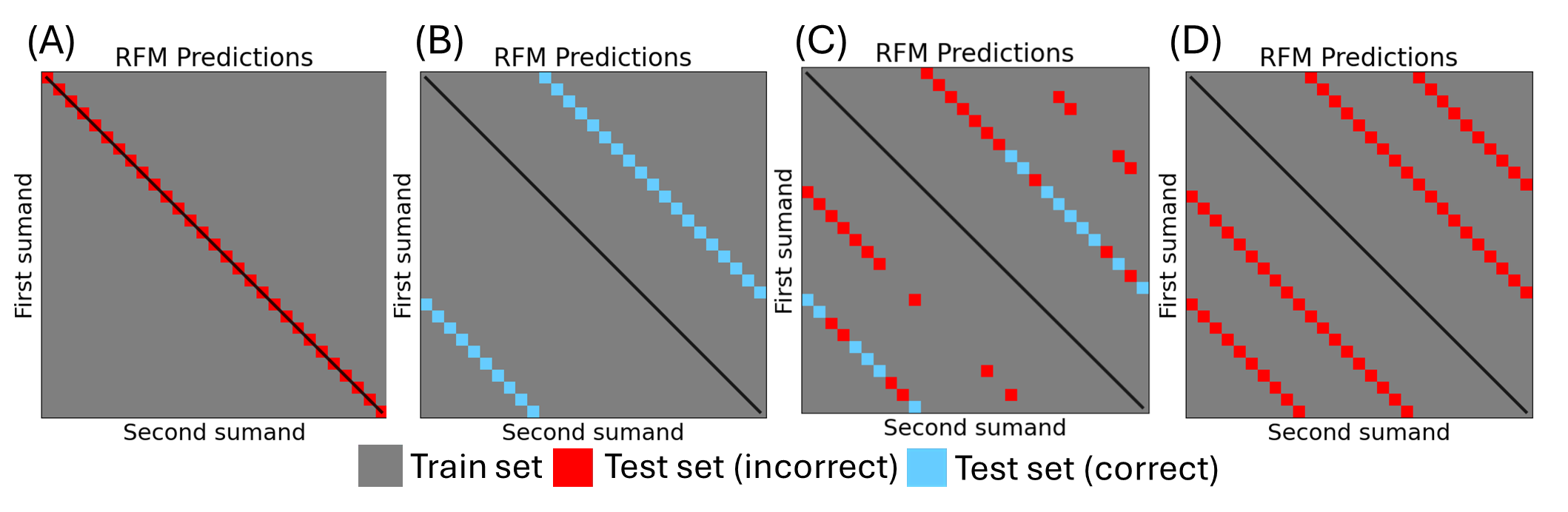}\
    \captionof{figure}{We train RFM with a Gaussian kernel on addition modulo $p=29$. (A) We train on all data samples except the fixed points under $s$ with $M_0=I_{2p}$. The fixed points aren't correctly classified. (B) We train on all data samples except the fixed points under $sr^{10}$ with $M_0$ encoding $H=\{\text{id},s\}$. (C) We repeat the setting from B, but we remove some of the pairs under $s$, the reflection encoded in $M_0$, of the points in the test set from the train set, which makes some samples be incorrectly classified. (D) We continue the pattern, this time removing all the pairs under $s$ of the points in the test set. All the test samples are incorrectly classified.}
    \label{fig:rfm_predictions}
\end{minipage}

We empirically verify Finding \ref{claim:generalization} for all possible reflections for addition modulo 29 (Figure \ref{fig:precisionrecall_add}) and subtraction modulo 29 (Figure \ref{fig:precisionrecall_sub}). We train on a random 50\% of the data, setting $M_0$ to encode $H=\{\text{id},sr^k\}$ for all possible $sr^k$. The figures show the Cayley tables for addition and subtraction mod 29 where the random training set elements are highlighted in grey, the incorrectly classified elements in the test set are higlighted in red, and the correctly classified test samples are highlighted in blue. We use a different color scheme from Figure \ref{fig:orbitpredictions} to improve readability. The structure of the matrices in Figures \ref{fig:precisionrecall_add} and \ref{fig:precisionrecall_sub} algins with Finding \ref{claim:generalization}. First, the samples in the corresponding reflection axis (highlighted by the black line) are either in the train set or incorrectly classified. Second, the incorrect predictions are symmetric with respect to the reflection axis, and if $(a,b)$ is a correct prediction, $(b-k,a-k)$ (it's reflection with respect to the axis) is in the train set. Repeating the words of Finding \ref{claim:generalization}, the test samples are only correctly classified if they are in $\text{Orbit}_{H}(X_{tr})$ for the chosen $H=\{\text{id},sr^k\}$ (marked in the plots by the black lines).

\begin{minipage}{1\textwidth}
    \centering
    \includegraphics[width=0.9\linewidth]{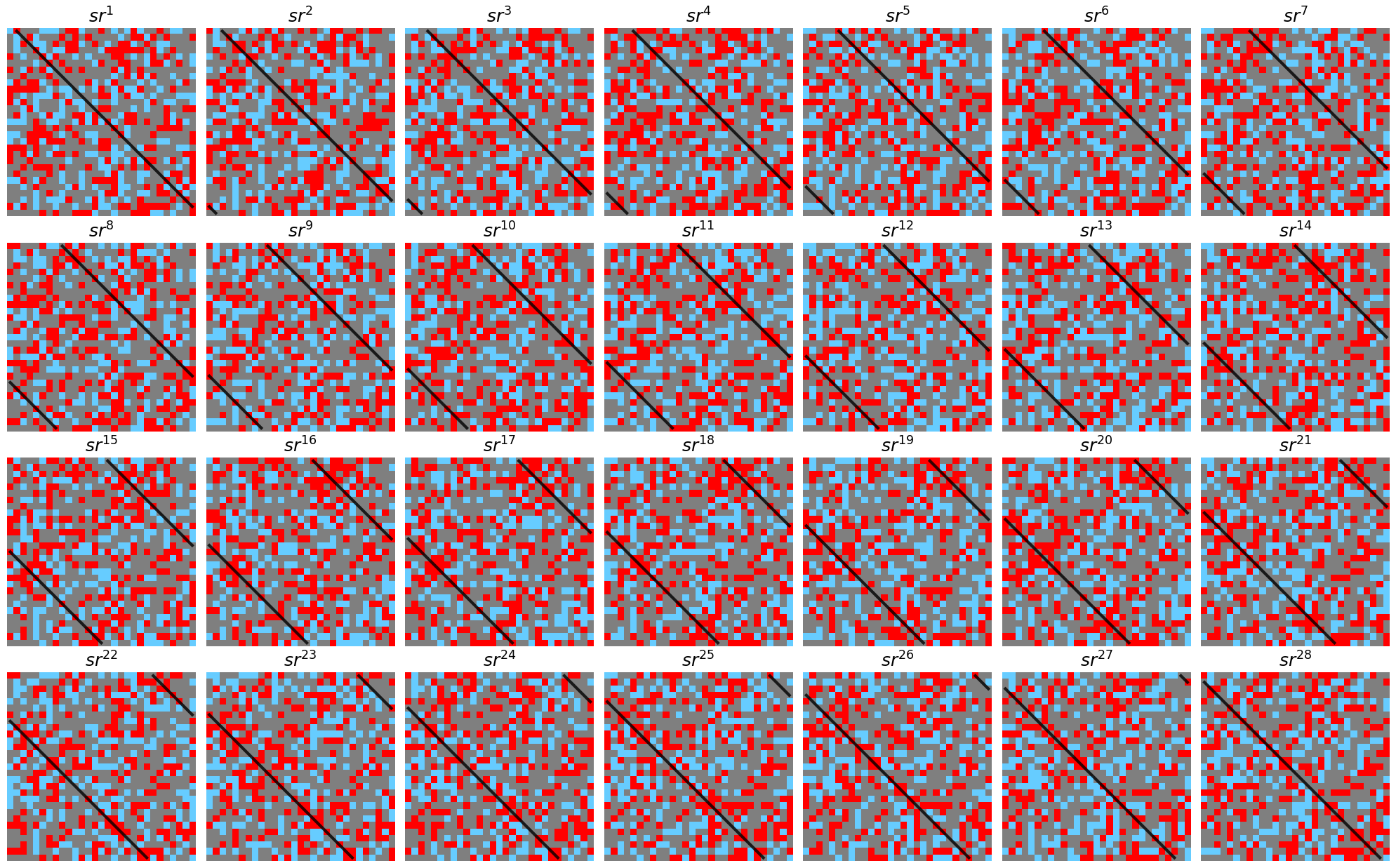}\
    \captionof{figure}{We train RFM with a Gaussian kernel on \textbf{addition }modulo $p=29$ on 50\% of the data with $M_0$ encoding the subgroup $H=\{\text{id},sr^{k}\}$. The reflection axis is marked by the black line. The square in the position $(i,j)$ encodes the sample $(i,j)$. Training samples are highlighted in grey, wrong predictions are highlighted in red, and correct predictions are highlighted in blue. There is a 100\% precision and 100\% recall correspondence with the correct predictions and the orbit of the train set with respect to $H$, $\text{Orbit}_H(X_{tr})$, which can be verified from the structure of the matrices with respect to the reflection axis.}
    \label{fig:precisionrecall_add}
\end{minipage}

\begin{minipage}{1\textwidth}
    \centering
    \includegraphics[width=0.9\linewidth]{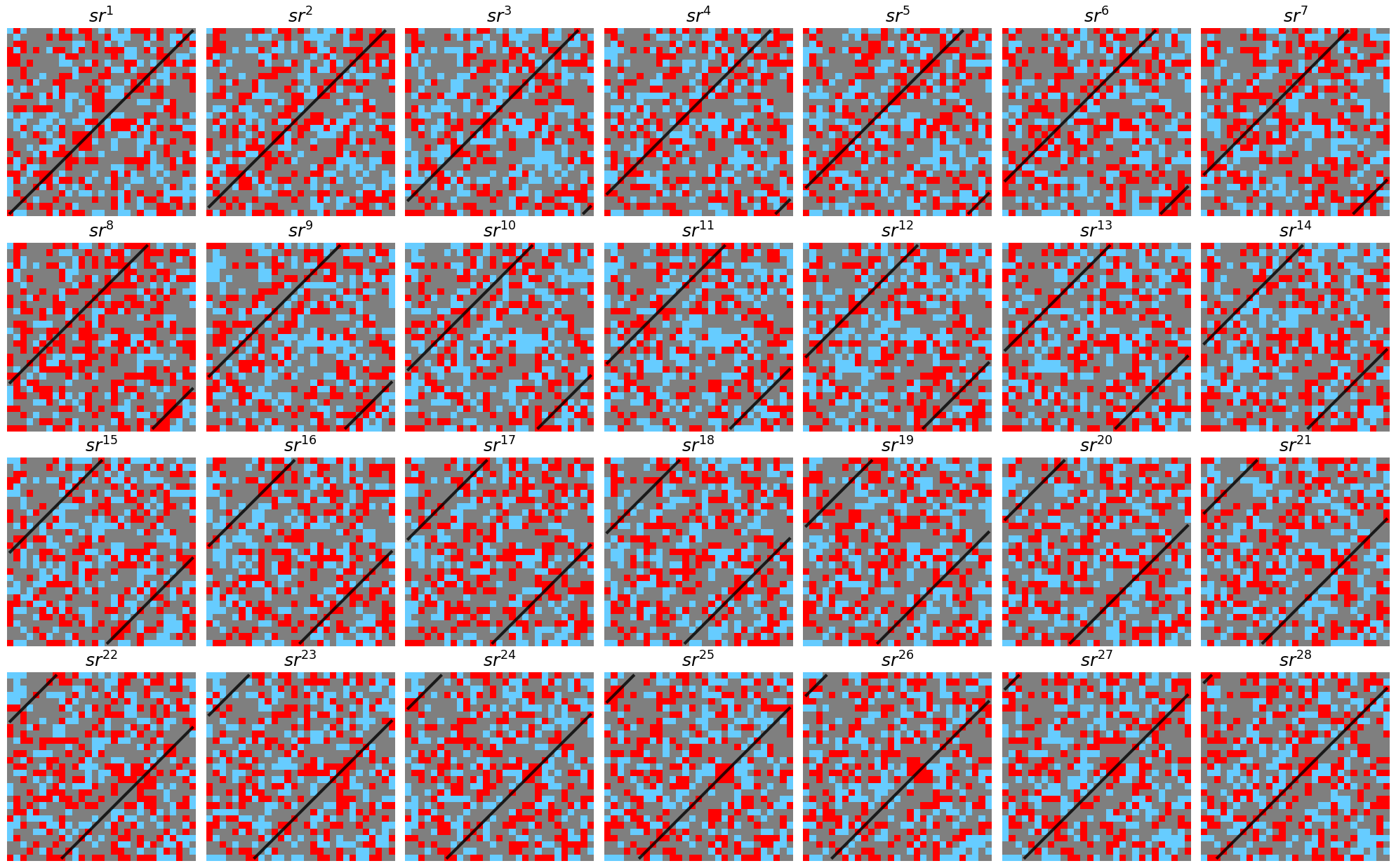}\
    \captionof{figure}{We train RFM with a Gaussian kernel on \textbf{subtraction} modulo $p=29$ on 50\% of the data with $M_0$ encoding the subgroup $H=\{\text{id},sr^{k}\}$. The reflection axis is marked by the black line. The square in the position $(i,j)$ encodes the sample $(i,j)$. Training samples are highlighted in grey, wrong predictions are highlighted in red, and correct predictions are highlighted in blue.}
    \label{fig:precisionrecall_sub}
\end{minipage}

\end{document}